\documentclass{article}

     \PassOptionsToPackage{numbers, compress}{natbib}



\usepackage[final, nonatbib]{neurips_2022}


\usepackage[utf8]{inputenc} 
\usepackage[T1]{fontenc}    
\usepackage{hyperref}       
\usepackage{url}            
\usepackage{booktabs}       
\usepackage{amsfonts}       
\usepackage{nicefrac}       
\usepackage{microtype}      
\usepackage{xcolor}         

\title{Free Probability for predicting the performance of feed-forward fully connected neural networks}

%

\author{%
  Reda ~CHHAIBI$^1$  \quad   
  Tariq ~DAOUDA$^2$  \quad              
  Ezéchiel ~KAHN$^3$ \\
  $^1$ Institut de Mathématiques de Toulouse, Université Paul Sabatier, Toulouse, France.\\
  $^2$ Institute of Biological Sciences, Mohammed VI Polytechnic University, Benguerir, Morocco. \\
  $^3$ CERMICS, Ecole des Ponts, INRIA, Marne-la-Vallée, France. \\
  \ \ \\
  \texttt{reda.chhaibi@math.univ-toulouse.fr},\\
  \texttt{tariq.daouda@um6p.ma},
  \texttt{ezechiel.kahn@enpc.fr}
}

\usepackage{amsmath,amsfonts,amssymb,amsopn,amscd,amsthm}
\usepackage{gensymb}
\usepackage{comment}
\usepackage{dsfont}
\usepackage{algorithm, algorithmic}
\usepackage{bm}
\usepackage{caption, graphicx}
\usepackage{color}
\usepackage{enumitem}
\usepackage{makecell}
\usepackage{multirow}


\def\half{\frac{1}{2}}
\def\third{\frac{1}{3}}
\def\1{{\mathbf 1}}


\def\N{{\mathbb N}}
\def\Z{{\mathbb Z}}

\def\R{{\mathbb R}}
\def\C{{\mathbb C}}

\def\P{{\mathbb P}}
\def\E{{\mathbb E}}


\def\Cc{{\mathcal C}}

\def\Lc{{\mathcal L}}

\def\Nc{{\mathcal N}}
\def\Oc{{\mathcal O}}
\def\Pc{{\mathcal P}}

\def\Nb{{\mathbf N}}


\newtheorem{thm}{Theorem}[section]
\newtheorem{definition}[thm]{Definition}
\newtheorem{lemma}[thm]{Lemma}

\newtheorem{rmk}[thm]{Remark}

\numberwithin{equation}{section}
\numberwithin{figure}{section}
\numberwithin{table}{section}

\begin{document}

\maketitle

\begin{abstract}
Gradient descent during the learning process of a neural network can be subject to many instabilities. The spectral density of the Jacobian is a key component for analyzing stability. Following the works of Pennington et al., such Jacobians are modeled using free multiplicative convolutions from Free Probability Theory (FPT).
We present a reliable and very fast method for computing the associated spectral densities, for given architecture and initialization. This method has a controlled and proven convergence. Our technique is based on an homotopy method: it is an adaptative Newton-Raphson scheme which chains basins of attraction.
In order to demonstrate the relevance of our method we show that the relevant FPT metrics computed before training are highly correlated to final test accuracies – up to 85\%. We also nuance the idea that learning happens at the edge of chaos by giving evidence that a very desirable feature for neural networks is the hyperbolicity of their Jacobian at initialization.
\end{abstract}

\section{Introduction}

Neural network training and tuning can be wasteful in human and energy resources. For example \cite[Table 1]{strubell2019energy} show that a single GPU training can have high energetic costs. Often, this is nothing compared to architecture search. In this context, an obvious moonshot is estimating the performance of an architecture {\it before} training. The goal of this paper is more realistic: providing a fast and reliable computational method for estimating stability before training -- and stability is a good proxy for performance \cite{higham2002accuracy}. 

\medskip

\textbf{Framework:} Consider a feed-forward network of depth $L \in \N$, with $L$ full-connected layers.
For each depth $\ell\in \{1,2, \dots, L\}$, the layer has activation vector $x_\ell \in \R^{N_\ell}$, where $N_\ell$ is the current width. The vector $x_0 \in \R^{N_0}$ takes in the neural network's input, while $x_L \in \R^{N_L}$ gives the output. The vector of widths is written $ \Nb := \left( N_0, N_1, \dots, N_L \right)$
and will appear in superscript to indicate the dependence in any of the $N_\ell$'s. The following recurrence relation holds between layers $ x_\ell = \phi_\ell\left( W_\ell^{(\Nb)} x_{\ell-1} + b_\ell^{(\Nb)} \right) \ ,$
where $\phi_\ell$ is a choice of non-linearity applied entry-wise, $W_\ell^{(\Nb)} \in M_{N_\ell, N_{\ell-1}}\left( \R \right)$ is a weight matrix and $b_\ell^{(\Nb)} \in \R^{N_\ell}$ is the vector of biases. We write $h_\ell := W_\ell^{(\Nb)} x_{\ell-1} + b_\ell^{(\Nb)}$ for the pre-activations.

The Jacobian computed during back-propagation can be written explicitly by using the chain rule: 
\begin{align}
\label{eq:j}
J^{(\Nb)} := &
  \ \frac{\partial x_L}{\partial x_0}
  = \ \frac{\partial x_L    }{\partial x_{L-1}}
		\frac{\partial x_{L-1}}{\partial x_{L-2}}
		\dots
		\frac{\partial x_1}{\partial x_{0}}
  = \
      D_L^{(\Nb)} W_L ^{(\Nb)}
      D_{L-1}^{(\Nb)} W_{L-1}^{(\Nb)}
      \dots
      D_1^{(\Nb)} W_1^{(\Nb)} \ ,
\end{align}
where $D_\ell$'s are the diagonal matrices given by
\begin{align}
\label{eq:def_D}
\left[ D_\ell^{(\Nb)} \right]_{i,i} = \phi_\ell'\left( [h_\ell]_i \right) \ .
\end{align}

Technically, a step of gradient descent updates weights and biases following
\begin{align}
\label{eq:gradient_step}
             \left( W_\ell^{(\Nb)}, b_\ell^{(\Nb)} \right) 
\leftarrow & \left( W_\ell^{(\Nb)}, b_\ell^{(\Nb)} \right)
           - \alpha \frac{\partial \Lc}{\partial(W_\ell^{(\Nb)}, b_\ell^{(\Nb)})} \ ,
\end{align}
for each $\ell=1, \dots, L$. Here $\alpha>0$ is the learning rate and $\Lc$ is the loss on a minibatch. If the minibatch has size $B \in \N$, and corresponds a small sample $\left( (X_i, Y_i) \ ; \ i=1, \dots, B \right)$ of the dataset, we have
$ \Lc = \frac{1}{B} \sum_{i=1}^B d( x_L(X_i), Y_i ) \ .
$
Here $d$ is a real-valued distance or similarity function, the $X_i$'s are the input vectors while the $Y_i$'s are the output vectors (e.g. labels in the case of classifier, $Y_i \approx X_i$ in the case of an autoencoder etc...). 

The chain rule dictates:
\begin{align}
\label{eq:grad_chain}
  \frac{\partial \Lc}{\partial(W_\ell^{(\Nb)}, b_\ell^{(\Nb)})}
   = 
   &
   \ \frac{\partial \Lc}{\partial x_L}
     \frac{\partial x_L    }{\partial x_{L-1}}
		\dots
		\frac{\partial x_{\ell+1}}{\partial x_{\ell}}
   \frac{\partial x_\ell}{\partial(W_\ell^{(\Nb)}, b_\ell^{(\Nb)})} 
    = 
   \ \frac{\partial \Lc}{\partial x_L}
   J_\ell^{(\Nb)}
   \frac{\partial h_\ell}{\partial(W_\ell^{(\Nb)}, b_\ell^{(\Nb)})} \ ,
\end{align}
where
\begin{align}
\label{eq:grad_loss}
        \frac{\partial \Lc}{\partial x_L}
  = & \ \frac{1}{B} \sum_{i=1}^B \partial_1 d( x_L(X_i), Y_i ) 
  \in M_{1, N_L}(\R)
  \ ,
\end{align}
\begin{align}
\label{eq:J_ell}
J_\ell^{(\Nb)} = & 
   \ D_L^{(\Nb)} W_L^{(\Nb)}
   \dots
   D_{\ell+1}^{(\Nb)} W_{\ell+1}^{(\Nb)}
   D_{\ell}^{(\Nb)}
   \in M_{N_L, N_\ell}(\R)
   \ .
\end{align}
Therefore, for the sake of simplicity, we shall focus on the Jacobian $J^{(\Nb)}$ given in Eq. \eqref{eq:j} since it has exactly the same form as the $J_\ell^{(\Nb)}$ given in Eq. \eqref{eq:J_ell}. The issue is that a large product of (even larger) matrices can easily become unstable. If many singular values are $\ll 1$, we have gradient vanishing. If many singular values are $\gg 1$, we have gradient explosion. Such a transition can be referred to as the edge of chaos \cite{yang2017mean, zhang2021edge}.

\medskip

\textbf{Intuition.} This instability is easily understood thanks to the naive analogy with the one-dimensional case. Indeed, the geometric progression $q^n$ with $n \rightarrow \infty$ is the archetype of a long product and it converges extremely fast, to either $0$ if $|q|<0$ or to $\infty$ if $|q|>1$. 

A less naive intuition consists in observing that mini-batch sampling in Eq. \eqref{eq:grad_loss} is very noisy. It is fair to assume that $\frac{\partial \Lc}{\partial x_L}$ has a Gaussian behavior with covariance proportional to $I_{N_L}$ -- either because of the Central Limit Theorem if $B$ is large enough or after time averaging, because of the mixing properties of SGD \cite{bengio1994learning, geiping2021stochastic}. Therefore, each gradient step $\alpha \frac{\partial \Lc}{\partial(W_\ell^{(\Nb)}, b_\ell^{(\Nb)})}$ in Eq. \eqref{eq:gradient_step} is approximately a Gaussian vector with covariance proportional to:
$$ \alpha^2
   \left( \frac{\partial h_\ell}{\partial(W_\ell^{(\Nb)}, b_\ell^{(\Nb)})} \right)^T
   \left( J_\ell^{(\Nb)} \right)^T
   J_\ell^{(\Nb)}
   \frac{\partial h_\ell}{\partial(W_\ell^{(\Nb)}, b_\ell^{(\Nb)})} \ . $$
Simplifying further, we see the importance of the spectrum of $\left( J_\ell^{(\Nb)} \right)^T J_\ell^{(\Nb)}$ for stability. Basically, eigenvectors of $\left( J_\ell^{(\Nb)} \right)^T J_\ell^{(\Nb)}$ are the directions along which the one-dimensional intuition applies.

\medskip

\textbf{Randomness.}
Starting from the pioneering works of Glorot and Bengio \cite{glorot2010} on random initializations, it was suggested that the spectral properties of $J^{(\Nb)}$ are an excellent indicator for stability and learning performance. In particular, an appropriate random initialization was suggested and since implemented in all modern ML frameworks \cite{PYTORCH, TF}.  

We make classical choices of random initializations. The biases $b_\ell^{(\Nb)}$ are taken as random vectors which entries are centered i.i.d. Gaussian random variables with standard deviation $\sigma_{b_\ell}$. For the weights, we will consider the following matrix ensembles: the $[W_\ell^{(\Nb)}]_{i,j}$ are drawn from i.i.d. centered random variables with variance $\sigma_{W_\ell}^2/N_{\ell}$ and finite fourth moment as in \cite{pastur2020random}.

\medskip

\textbf{Modeling spectrum thanks to Free Probability Theory.}
Now, following the works of Pennington et al. \cite{pennington2018}, the tools of Free Probability Theory (FPT) can be used to quantitatively analyze the singular values of $J^{(\Nb)}$ in the large width limit. The large width limit is particularly attractive when studying large deep networks, especially because free probability appears at relatively small sizes because of strong concentration properties \cite{louart2018}. Indeed, random matrices of size $100$ exhibit freeness.

 For the purposes of this paragraph, we restrict ourselves to square matrices and assume $N_\ell = N$ for all $\ell=1, \dots, L$. In fact, FPT is concerned with the behavior of spectral measures as $N \rightarrow \infty$. For any diagonalizable $A_N \in M_N(\R)$, the associated spectral measure on the real line is:
 $$ \mu_{A^{(N)}}(dx) := \frac{1}{N} \sum_{i=1}^N \delta_{a_i^{(N)}} (dx)$$ 
 with the $a_i^{(N)}$'s being the eigenvalues of $A_N$. For ease of notation, the spectrum of (squared) singular values is written $ \nu_{A^{(N)}} := \mu_{\left( A^{(N)} \right)^T A^{(N)}}$. A fundamental assumption for invoking tools from Free Probability Theory, is the assumption of \textit{asymptotic freeness}. Without defining the notion, which can be found in \cite{mingo2017free}, let us describe the important computation it allows, discovered in the seminal work of Voiculescu \cite{voiculescu1987multiplication}. Given two sequences of square matrices $A^{(N)}$, $B^{(N)}$ in $M_N(\R)$, with converging spectral measures:
$$ \lim_{N \rightarrow \infty}
   \nu_{A^{(N)}} = \nu_A \ , 
   \quad
   \lim_{N \rightarrow \infty}
   \nu_{B^{(N)}} = \nu_B \ , 
   $$ 
we have that, under the assumption of asymptotic freeness 
$ \lim_{N \rightarrow \infty} \nu_{A^{(N)} B^{(N)}} = \nu_A \boxtimes \nu_B \ ,$
where $\boxtimes$ is a deterministic operation between measures called multiplicative free convolution. The $\boxtimes$ will be detailed in Section \ref{section:free_probability}. The letter $A$ (as well as $B$) does not correspond to a limiting matrix but to an abstract operator, with associated spectral measure $\mu_A$ and measure of squared singular values $\nu_A$. For such limiting operators, we drop the superscript ${(N)}$.

Under suitable assumptions which are motivated and detailed later following the works of \cite{pennington2018, hanin2019products, pastur2020gaussian, pastur2020random, collins2021asymptotic}, for all $\ell=1, \dots, L$, the measures $\nu_{W_\ell^{(\Nb)}}$ and $\nu_{D_\ell^{(\Nb)}}$ will respectively converge to $\nu_{W_\ell}$ and $\nu_{D_\ell}$. Again the $W_\ell$'s and $D_\ell$'s are abstract operators which only make sense in the infinite width regime. In the limit, asymptotic freeness will also hold. Therefore, we will see that the measure of interest is:
\begin{align}
\label{eq:mu_J}
\lim_{N \rightarrow \infty} \nu_{J^{(\Nb)}}
= \ &
\nu_{J}
:=
\nu_{D_L} \boxtimes \nu_{W_L} \boxtimes 
\dots \boxtimes
\nu_{D_1} \boxtimes \nu_{W_1} \ .
\end{align}   
The goals of this paper are (1) To give a very fast and stable computation of $\nu_J$, in the more general setup of  rectangular matrices (2) Empirically demonstrate that FTP metrics computed from $\nu_J$ do correlate to the final test accuracy. 

\subsection{Contributions}
We aim at streamlining the approach of Pennington et al. by providing the tools for a systematic use of FPT. The contributions of this paper can be categorized as:

\begin{itemize}[leftmargin=*]

\item Theoretical: In Pennington et al., a constant width is assumed. We generalize the model to allow for varying width profiles, which is more inline with design practices. This requires us to develop a rectangular multiplicative free convolution. 

Then we propose a computational scheme for computing spectral densities, named "Newton lilypads". The method relies on adaptative inversions of $S$-transforms using the Newton-Raphson algorithm. If the Newton-Raphson scheme is only local, we achieve a global resolution by chaining basins of attractions, thanks to doubling strategies. As such, we have theoretical guarantees for the convergence.

Interestingly, even in the FPT community, inverting $S$-transforms has been considered impossible to realize in practice. As we shall see, an $S$-transform is a holomorphic map with multi-valued inverse. In the words of \cite[p.218]{BD2008}, ``the operations which are necessary in this method (the inversion of certain functions, the extension of certain analytic functions) are almost always impossible to realise practically.''

\item Numerical: This misconception led to the use of combinatorial methods based on moments, or fixed-point algorithms via the subordination method \cite{arizmendi2020subordination, maida2020statistical, tarrago2020spectral}. 
In the ML community, Pennington et al. pioneered the application of FPT to the theoretical foundations of Machine Learning and did not shy away from inverting $S$-transforms. Their \cite[Algorithm 1]{pennington2018} is based on a generic root finding procedure, and choosing the root closest to the one found for the problem with one less layer. A major drawback of this method is that there is no guarantee to find the correct value, unlike our chaining which always chooses the correct branch.

Not only Newton lilypads has theoretical guarantees of convergence, but it is also an order of magnitude faster (Fig. \ref{fig:benchmark}). A few standard Cython optimizations allow to gain another order of magnitude, although this can certainly be refined. 

\begin{figure}[ht]
\includegraphics[trim=80 0 80 0, clip, width=1.0\textwidth]{./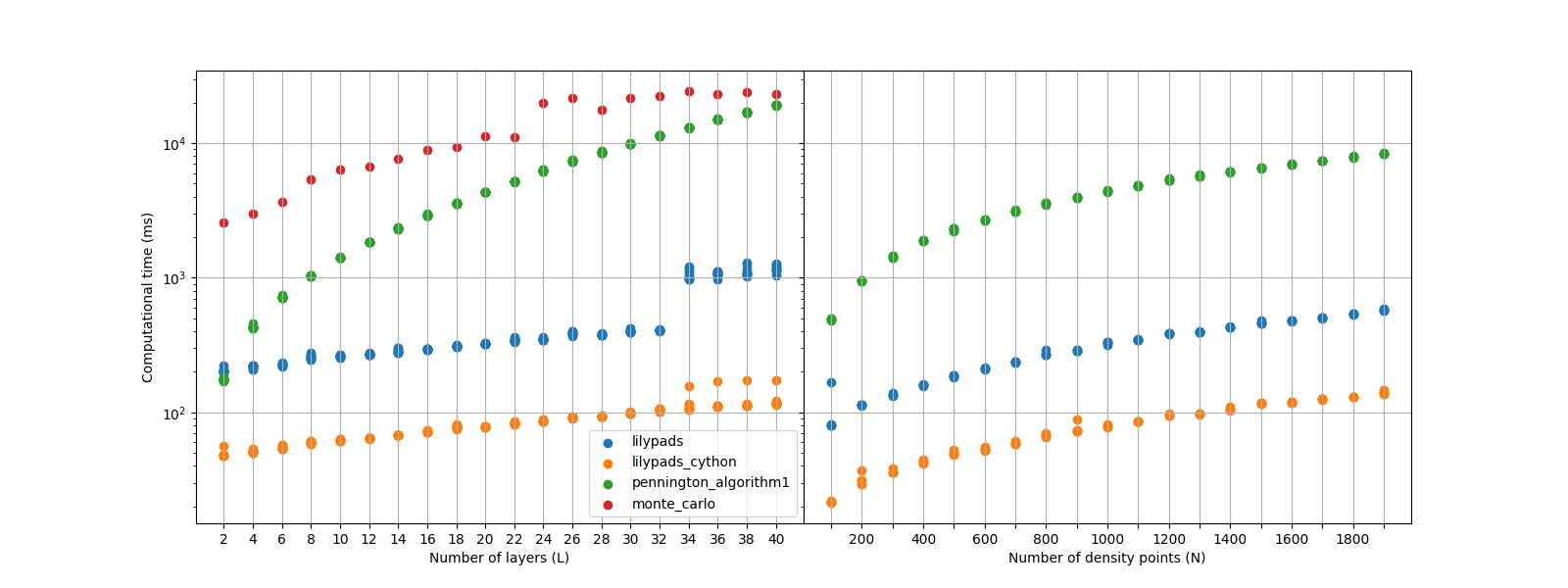}
\caption{Computation time (in ms) for the density of $\nu_J$ w.r.t. depth $L$ (left) and number of density points (right). Vertical axis is log-scale. The benchmarked methods are Newton lilypads in pure Python (blue), Newton lilypads with Cython optimizations (orange), Pennington et al.'s Algorithm 1 using a native root solver (green), Monte-Carlo (red). While of a different nature, Monte-Carlo is given for indication.}
\label{fig:benchmark}
\end{figure}

\item Empirical:
We analyze the correlation between the test accuracy of several randomly generated MLP architectures and the quantiles of $\nu_J$, after a fixed number of epochs. The same architectures were independently trained on the MNIST, FashionMNIST and CIFAR10 datasets. We find that accuracy is strongly correlated to FTP quantiles of $\nu_J$ (see Table \ref{table:correlations}). Remarkably, the correlation is almost entirely captured by the higher quantiles -- see Table \ref{table:quantiles_vs_r} for individual $R$ factors. Scatter plots showing the distribution of test accuracy, $log_{10}$ of the 90th quantile and number of parameters can be seen in Fig. \ref{figure:scatter_plots}. Interestingly, smaller networks can perform better than larger one provided they have more spread-out $\nu_J$ distribution.
This suggests that spread-out spectral distributions $\nu_J$ are more desirable, provided of course we avoid the vanishing and explosive regimes. In the language of dynamical systems, we say that the Jacobian needs to be hyperbolic i.e. with both contracting and expanding directions. This considerably nuances the idea that learning happens at the edge of chaos. A similar point was made in the conclusion of \cite{bengio1994learning}, using the language of hyperbolic attractors.
\end{itemize}

\begin{table}[htp]
\centering
\begin{tabular}{|c|c||c|c|c|c|c|}
  \hline
  Regression factors & 
  \diaghead(-4,1){aaaaaaaaaaaaaaaaaa}%
{Corr. indicator}
{Dataset} 
                        &  MNIST & FashionMNIST & CIFAR10\\
  \hline
  \multirow{3}{*}{90th quantile only}         
                 & Spearman (p-value)    & 0.83  (3e-49) & 0.79 (6e-45) & 0.73 (4e-18) \\
                 & Pearson  (p-value)    & 0.84  (9e-54) & 0.81 (1e-48) & 0.74 (3e-19) \\
                        & $R$ factor     & 0.84 & 0.81 & 0.75 \\
  \hline
  All quantiles of $\nu_J$        & $R$ factor     & 0.89 & 0.87 & 0.83 \\
  \hline
\end{tabular}
\caption{Correlation between test accuracy and FPT metrics. 200 randomly generated MLP architectures were trained on 3 datasets. We performed linear regressions of test accuracy against quantiles of $\nu_J$. Although correlation is stronger considering all quantiles, higher quantiles are individually the most correlated (see Table \ref{table:quantiles_vs_r}). Hence, the computation of Spearman, Pearson and $R$ correlation factors between accuracy and 90th quantile. The last row shows the correlation of accuracy against all the quantiles $10, 20, \dots, 80, 90 \%$ during a multivariate regression.}
\label{table:correlations}
\end{table}

\begin{table}[ht!]
    \centering
    \begin{tabular}{|c|c|c|c|}
    \hline
        Quantile of $\nu_J$ & MNIST & FashionMNIST & CIFAR10 \\ \hline
        10\% & 0.058 & 0.035 & -0.016 \\ \hline
        20\% & 0.084 & 0.055 & 0.005 \\ \hline
        30\% & 0.176 & 0.134 & 0.063 \\ \hline
        40\% & 0.204 & 0.156 & 0.089 \\ \hline
        50\% & 0.234 & 0.186 & 0.123 \\ \hline
        60\% & 0.301 & 0.244 & 0.196 \\ \hline
        70\% & 0.407 & 0.342 & 0.304 \\ \hline
        80\% & 0.597 & 0.539 & 0.498 \\ \hline
        90\% & 0.845 & 0.807 & 0.755 \\ \hline
    \end{tabular}
    \caption{Accuracy vs ($log_{10}$ of) a single quantile of $\nu_J$. For each dataset and each quantile, the table reports the R factor in a bivariate linear regression.}
    \label{table:quantiles_vs_r}
\end{table}

\begin{figure}[ht]
\centering
    \includegraphics[trim=0 0 0 0, clip, width=0.9\textwidth]{./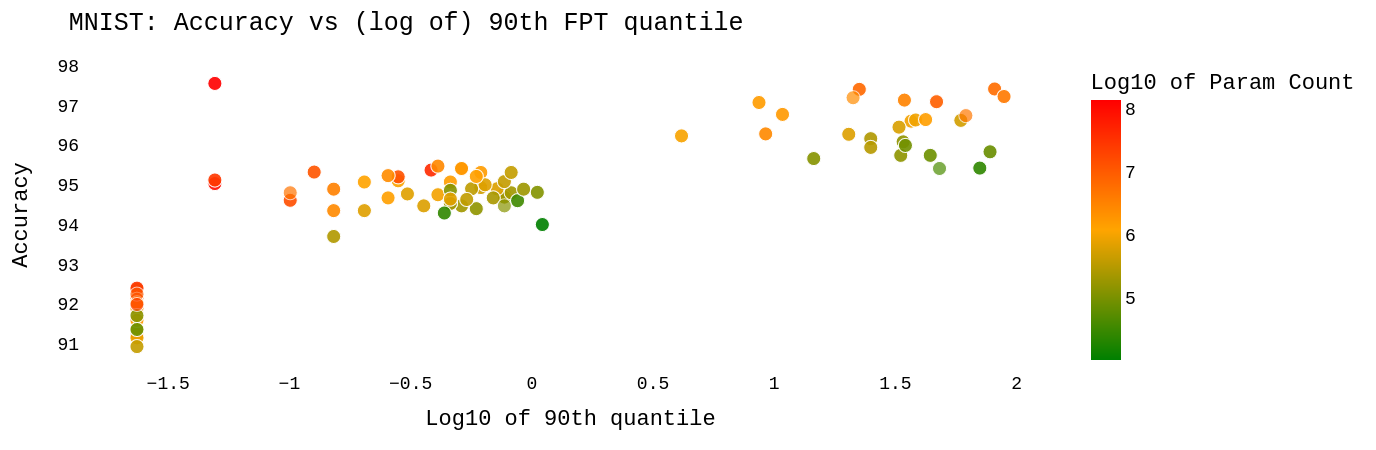}
    \includegraphics[trim=0 0 0 0, clip, width=0.9\textwidth]{./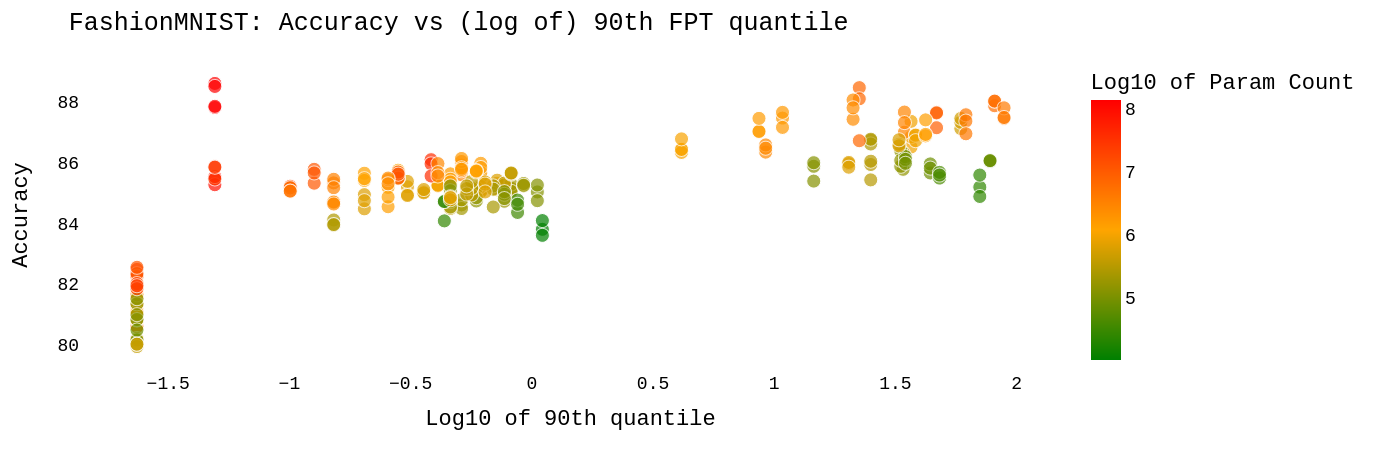}
    \includegraphics[trim=0 0 0 0, clip, width=0.9\textwidth]{./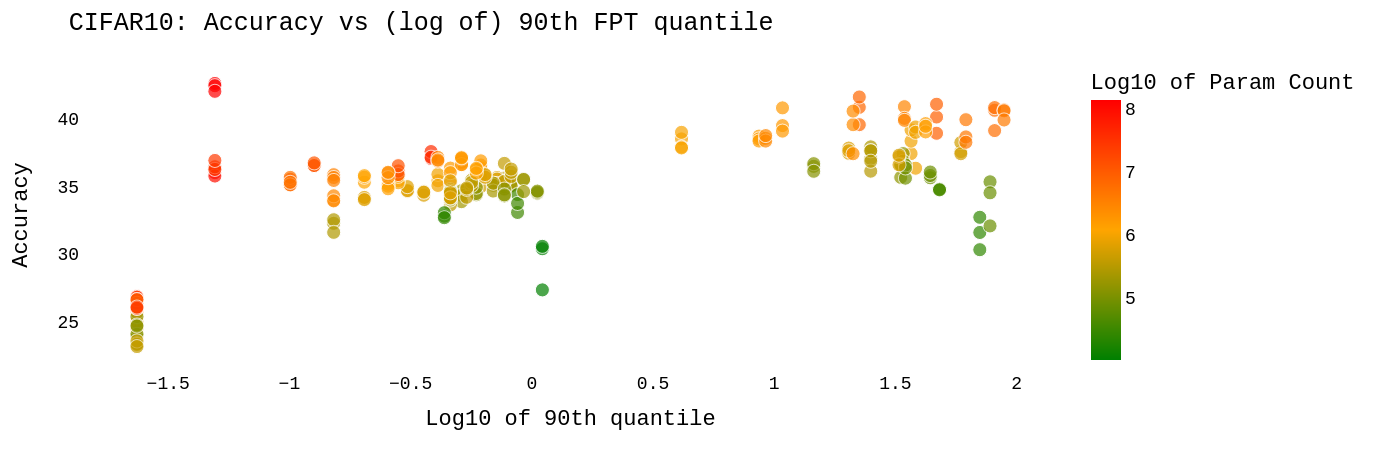}
  \caption{Accuracy versus ($log_{10}$ of) 90th quantile for various datasets. From top to bottom, datasets are MNIST, FahsionMNIST, CIFAR10.}
  \label{figure:scatter_plots}
\end{figure}

For reproducibility of the numerical and empirical aspects, a complete implementation is provided in a Github repository
\url{https://github.com/redachhaibi/FreeNN}

\subsection{Structure of the paper}
We start in Section~\ref{section:free_probability} by stating facts from Free Probability Theory. Most of it is available in the literature, except the definition of the product of rectangular free matrices. To the best of our knowledge, this is novel. There, we establish in the rectangular setting an analogue of Eq. \eqref{eq:mu_J} in Theorem \ref{thm:rectangularproduct}.

In Section~\ref{section:theory}, we explain in detail the FPT model for random neural networks. Then thanks to the results of \cite{pastur2020gaussian, pastur2020random} and our rectangular setting, we show that the spectral measure of the Jacobian $J^{(\Nb)}$ converges to $\nu_J$ and we encode the limit in explicit generating series in Theorem \ref{thm:S_J}. This gives how $\nu_J$ can \textit{theoretically} be recovered.

Section~\ref{section:numerics} presents the numerical resolution which inverts the (multi-valued) generating series. 
By chaining different (local) basins of attractions, we obtain a global resolution method. Our algorithm is detailed in Algorithm \ref{algo:global_strategy} and Theorem \ref{thm:theoretical_guarantee} states the theoretical guarantees.

Finally Section~\ref{section:benchmarks} presents the experiment leading to Table \ref{table:correlations}. More details are given in Appendix \ref{appendix:experiment}, including comments on the benchmark of Fig. \ref{fig:benchmark}.


\section{Free Probability}
\label{section:free_probability}

\subsection{Definitions and notations}
Free Probability Theory provides a framework to analyze eigenvalues and singular values of large random matrices.
We now introduce various complex-analytic generating series which encode the measures and the basic operations on them. First, the Cauchy-Stieltjes transform of $\mu$, a probability measure on $\R_+$  is:
  $$ \begin{array}{cccc}
        G_\mu: & \C_+ & \rightarrow & \C_- \\
               & z    & \mapsto     & \int_{\R_+}\frac{\mu(dv)}{z-v} \ ,
     \end{array}
  $$
  where
  $\C_\pm := \left\{ z \in \C \ | \ \pm \Im z > 0 \right\} \ .$
  The transform $G_\mu$ encodes the measure $\mu$ and reciprocally, the measure can be recovered thanks to:
  \begin{lemma}[Cauchy-Stieltjes inversion formula -- Theorem 6 in \cite{mingo2017free}]
  \label{lemma:stieltjes_inversion}
  We have the weak convergence of probability measures:
  $$ \lim_{y \rightarrow 0} -\frac{1}{\pi} \Im  G_\mu(x+iy) dx = \mu(dx) \ .$$
  \end{lemma}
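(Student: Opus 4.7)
The plan is to identify the quantity $-\frac{1}{\pi} \Im G_\mu(x+iy)$ as the convolution of $\mu$ with a standard approximate identity, and then invoke classical weak-convergence results for such convolutions.

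First I would compute the imaginary part by hand. For $z = x + iy$ with $y > 0$ and $v \in \R_+$,
$$ \frac{1}{z - v} = \frac{(x-v) - iy}{(x-v)^2 + y^2} \ , $$
so exchanging $\Im$ and the integral (which is legal by dominated convergence since $\mu$ is a finite measure and the integrand is bounded by $1/y$) gives
$$ -\frac{1}{\pi} \Im G_\mu(x + iy) = \int_{\R_+} P_y(x - v)\, \mu(dv) = (P_y * \mu)(x) \ , $$
where $P_y(u) := \frac{1}{\pi} \frac{y}{u^2 + y^2}$ is the Poisson/Cauchy kernel of width $y$. Viewing $\mu$ as a probability measure on $\R$ (supported in $\R_+$), the object on the left-hand side is just the density of the convolution $P_y * \mu$.

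Next I would verify that $(P_y)_{y>0}$ is an approximate identity as $y \to 0$: it consists of nonnegative functions with $\int_\R P_y = 1$, and for every $\delta > 0$ one has $\int_{|u|>\delta} P_y(u)\,du \to 0$ because $P_y(u) \le y/(\pi \delta^2)$ there. From these three properties it follows by a standard argument that for every bounded continuous test function $f \in C_b(\R)$,
$$ \int_\R f(x)\, (P_y * \mu)(x)\, dx = \int_\R (P_y * f)(v)\, \mu(dv) \xrightarrow[y \to 0]{} \int_\R f(v)\, \mu(dv) \ , $$
where the last step uses that $P_y * f \to f$ uniformly on compacts and is uniformly bounded by $\|f\|_\infty$, so that dominated convergence applies against the finite measure $\mu$. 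This is precisely the weak convergence of $-\frac{1}{\pi}\Im G_\mu(x+iy)\,dx$ to $\mu(dx)$ claimed in the lemma.

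The only real subtlety is justifying that $P_y * f \to f$ uniformly on compact sets when $f$ is merely bounded and continuous — the argument requires splitting the convolution integral into a neighborhood of the origin (where continuity of $f$ gives a uniform small contribution) and its complement (where the mass of $P_y$ vanishes). This is classical, and once it is in place the rest of the proof is routine bookkeeping; no functional-analytic input beyond Fubini, dominated convergence, and the elementary estimates on $P_y$ described above is needed.
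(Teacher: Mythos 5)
Your proof is correct. The paper does not actually prove this lemma -- it is quoted with a citation to Theorem 6 of Mingo and Speicher -- and your argument (rewriting $-\frac{1}{\pi}\Im G_\mu(x+iy)$ as the Poisson-kernel convolution $P_y * \mu$ and using the approximate-identity property of $P_y$ together with Fubini and dominated convergence) is exactly the standard proof found in that reference, so the two approaches coincide.
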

  The moment generating function is
  \begin{align}
  \label{def:M}
      M_\mu(z)=zG_\mu(z)-1 = \sum_{k=1}^{+\infty}\frac{m_k(\mu)}{z^k} \ ,
  \end{align}
  where for all $k\in\N$, $m_k(\mu) := \int_\R x^k \mu(dx)$ is the $k$-th moment of $\mu$. For $\mu\neq\delta_0$, $M_\mu$ is invertible in the neighborhood of $\infty$ and the inverse is denoted by $M_\mu^{\langle-1\rangle}$. The S-transform of $\mu$ is defined as
  $S_\mu(m)= \frac{1+m}{m M^{\langle-1\rangle}_\mu(m)}$,
  and is  analytic in a neighborhood of $m=0$. Furthermore, the variable $z$ will always denote an element of $\C_+$, while the variables $g$ and $m$ will denote elements in the image of $G_\mu$ and $M_\mu$.
For a diagonalizable matrix $A^{(N)} \in M_N(\R)$, we write
    $S_{A^{(N)}} := S_{\mu_{A^{(N)}}}$,
    $G_{A^{(N)}} := G_{\mu_{A^{(N)}}}$,
    $M_{A^{(N)}} := M_{\mu_{A^{(N)}}}$. 

   A landmark result in the field introduces free multiplicative convolution in a natural way, and shows that this operation is linearized by the $S$-transform:
\begin{thm}[Voiculescu, \cite{voiculescu1987multiplication}]
\label{thm:voiculescu}
Consider two sequences of positive matrices, each element in $M_N(\R)$
$$ \left( A^{(N)} \ ; \ N \geq 1 \right) \ , \quad \left( B^{(N)} \ ; \ N \geq 1 \right) \ ,$$
such that:
$$ \lim_{N \rightarrow \infty} \mu_{A^{(N)}} = \mu_A \ ,
   \quad
   \lim_{N \rightarrow \infty} \mu_{B^{(N)}} = \mu_B \ .$$
Under the assumption of asymptotic freeness for $A^{(N)}$ and $B^{(N)}$, there exists a deterministic probability measure $\mu_A \boxtimes \mu_B$ such that 
$ \lim_{N \rightarrow \infty} \mu_{A^{(N)} B^{(N)}} = \mu_A \boxtimes \mu_B$.
The operation $\boxtimes$ is the multiplicative free convolution. Moreover
\begin{align}
\label{eq:S_linearizes}
 S_{AB}(m) & = S_A(m) S_B(m) \ . 
\end{align}
\end{thm}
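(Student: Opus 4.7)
The plan is to split the statement into (i) the existence of a deterministic limiting spectral measure, and (ii) the multiplicative identity $S_{AB}(m) = S_A(m) S_B(m)$. The first part rests on the method of moments; the second on the combinatorics of non-crossing partitions and Speicher's characterization of freeness through the vanishing of mixed free cumulants.

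For (i), I would unfold any mixed moment $\tau_N\bigl((A^{(N)}B^{(N)})^n\bigr) = \frac{1}{N}\mathrm{tr}\bigl((A^{(N)}B^{(N)})^n\bigr)$ by repeatedly centering factors. Asymptotic freeness says precisely that the trace of any alternating product of centered polynomials in $A^{(N)}$ and $B^{(N)}$ vanishes in the limit, so $\lim_N \tau_N\bigl((A^{(N)}B^{(N)})^n\bigr)$ reduces to a universal polynomial in the moments of $\mu_A$ and $\mu_B$ alone. Positivity of $A^{(N)}, B^{(N)}$ coupled with convergence of moments gives uniformly bounded supports, so this limiting moment sequence corresponds to a unique compactly supported probability measure on $\R_+$ by determinacy of the bounded moment problem. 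Declare this measure to be $\mu_A \boxtimes \mu_B$; by construction it is deterministic and depends only on $\mu_A$ and $\mu_B$.

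For (ii), the cleanest route is via free cumulants $\kappa_n(\mu)$, defined from the moments by the non-crossing Möbius inversion $m_n(\mu) = \sum_{\pi \in NC(n)} \prod_{V \in \pi} \kappa_{|V|}(\mu)$. Freeness of $A$ and $B$ is equivalent to the vanishing of their mixed free cumulants. Applied to $\tau((AB)^n)$, this yields the Kreweras complementation formula
\begin{align*}
  m_n(AB) \,=\, \sum_{\pi \in NC(n)} \kappa_\pi[A]\, m_{K(\pi)}[B]\, ,
\end{align*}
where $K$ denotes the Kreweras complement on $NC(n)$. Repackaging this identity into the generating series $M_\mu(z) = \sum_{k \geq 1} m_k(\mu)\, z^{-k}$, and using the lattice bijection $\pi \mapsto K(\pi)$ together with the moment–cumulant relation applied to each factor, produces the analytic functional equation
\begin{align*}
  M_{AB}^{\langle-1\rangle}(m) \,=\, \frac{m}{1+m}\, M_A^{\langle-1\rangle}(m)\, M_B^{\langle-1\rangle}(m)\, ,
\end{align*}
which is equivalent to $S_{AB}(m) = S_A(m) S_B(m)$ by the very definition of the $S$-transform.

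The main obstacle is this last repackaging: extracting a clean product structure from the combinatorial Kreweras formula. The identity initially holds only as formal power series in $1/z$, and one has to (a) verify that the Kreweras bijection really produces the stated factorization after summation over $n$, and (b) promote the formal identity to an analytic one. Step (b) is handled by the compactness of supports, which guarantees a positive radius of convergence for $M_\mu^{\langle-1\rangle}$ near the origin and allows analytic continuation to the common neighborhood where the $S$-transforms are defined.
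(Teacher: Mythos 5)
This statement is not proved in the paper at all: it is imported verbatim as a classical result, with the citation to Voiculescu's 1987 paper (and \cite{mingo2017free} for background) standing in for the proof. So there is nothing internal to compare against; what you have written is a proof sketch of a background theorem the authors deliberately take as given. On its own merits, your outline follows the standard combinatorial route of Nica--Speicher (vanishing of mixed free cumulants, the formula $m_n(AB)=\sum_{\pi\in NC(n)}\kappa_\pi[A]\,m_{K(\pi)}[B]$ via Kreweras complementation, then repackaging into $M_{AB}^{\langle-1\rangle}(m)=\frac{m}{1+m}M_A^{\langle-1\rangle}(m)M_B^{\langle-1\rangle}(m)$, which is indeed equivalent to $S_{AB}=S_AS_B$), rather than Voiculescu's original analytic argument; both are legitimate, and the combinatorial one is the cleaner route for a self-contained write-up. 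Two caveats you should address if you intend this to be a complete proof. First, in part (i) you pass from weak convergence of $\mu_{A^{(N)}},\mu_{B^{(N)}}$ to convergence of all moments and to uniformly bounded supports; weak convergence alone gives neither, so you must either assume convergence in moments together with uniform operator-norm bounds (the usual setting in which asymptotic freeness is formulated) or say explicitly that this is the sense of convergence intended. You should also note that $A^{(N)}B^{(N)}$ is not symmetric, so $\mu_{A^{(N)}B^{(N)}}$ is to be read through the positive matrix $(A^{(N)})^{1/2}B^{(N)}(A^{(N)})^{1/2}$, which has the same spectrum, and that $S_{AB}$ requires $\mu_A,\mu_B\neq\delta_0$. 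Second, the ``repackaging'' you flag as the main obstacle is genuinely the heart of the multiplicativity proof -- the summation over $NC(n)$ with the Kreweras bijection does not collapse to the product formula by inspection, and in the literature it is handled either through the boxed convolution machinery of formal power series or through a subordination/analytic argument -- so as written your part (ii) is an honest plan rather than a proof.
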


This convergence akin to a law of large numbers is the key ingredient which allows to build the deterministic model for the back-propagation of gradients in Eq. \eqref{eq:mu_J}.

\subsection{Product of rectangular free matrices}

As a generalization of Eq. \eqref{eq:S_linearizes} to rectangular matrices, we state:
\begin{thm}
  \label{thm:rectangularproduct}
  Let $(p_N)_{N\geq1},(q_N)_{N\geq1},(r_N)_{N\geq1},$ be three sequences of integers satisfying
  $$
  p_N,q_N,r_N\underset{N\rightarrow\infty}{\longrightarrow}\infty,
  \qquad
  \frac{r_N}{q_N}\underset{N\rightarrow\infty}{\longrightarrow} c > 0 \ .
  $$
  Consider for all $N\geq1$ let $A^{(N)}$, $B^{(N)}$ be random matrices of respective sizes $p_N\times q_N$ and $q_N\times r_N$ such that the (squared) singular laws of $A^{(N)},B^{(N)}$ converge weakly. Assuming that $B^{(N)} \left( B^{(N)} \right)^T$ and $\left( A^{(N)} \right)^T A^{(N)}$ are asymptotically free, we have that in the limit $N \rightarrow \infty$:
  $$ 
  S_{(AB)^T AB}(m) = S_{A^T A}\left(c m\right) S_{B^T B}(m) \ .
  $$
\end{thm}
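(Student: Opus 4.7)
The strategy is to reduce to the square case covered by Theorem~\ref{thm:voiculescu} by exploiting the classical fact that for rectangular matrices $X, Y$, the products $XY$ and $YX$ share the same nonzero eigenvalues with multiplicity. Writing $(AB)^T AB = B^T(A^T A)B$, this matrix and $(A^T A)(BB^T)$ share the same nonzero spectrum, and similarly for $B^T B$ versus $BB^T$. The point is that $A^T A$ and $BB^T$ are both square of size $q_N \times q_N$, so the hypothesis of asymptotic freeness of $(A^{(N)})^T A^{(N)}$ and $B^{(N)}(B^{(N)})^T$ is exactly what is needed to invoke Voiculescu's theorem on their product.

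First, assuming $c \leq 1$ so that eventually $r_N \leq q_N$, I would observe the spectral measure identities
\begin{align*}
\mu_{A^T A B B^T} &= \tfrac{r_N}{q_N} \mu_{(AB)^T AB} + \left(1 - \tfrac{r_N}{q_N}\right)\delta_0, \\
\mu_{B B^T} &= \tfrac{r_N}{q_N} \mu_{B^T B} + \left(1 - \tfrac{r_N}{q_N}\right)\delta_0,
\end{align*}
obtained by adding the requisite number of zero eigenvalues to the smaller matrices. Passing to the limit, both measures have the form $\nu_c := c\,\mu + (1-c)\delta_0$ relative to $\mu_{(AB)^T AB}$ and $\mu_{B^T B}$ respectively.

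Next I would establish the key compression identity for the $S$-transform: if $\nu = c\mu + (1-c)\delta_0$ with $c \in (0,1]$, then
\begin{equation*}
S_\nu(m) \;=\; \frac{1+m}{c+m}\, S_\mu\!\left(\tfrac{m}{c}\right).
\end{equation*}
This is a short computation: the moment relation $m_k(\nu) = c\, m_k(\mu)$ for $k \geq 1$ gives $M_\nu(z) = c\,M_\mu(z)$, hence $M_\nu^{\langle -1\rangle}(m) = M_\mu^{\langle -1\rangle}(m/c)$, and plugging into the definition of $S$ yields the formula.

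Then I would apply Voiculescu's theorem to the square, asymptotically free pair $A^T A$ and $BB^T$ to get $S_{A^T A B B^T}(m) = S_{A^T A}(m)\, S_{BB^T}(m)$. Rewriting both sides with the compression identity,
\begin{equation*}
\frac{1+m}{c+m}\, S_{(AB)^T AB}\!\left(\tfrac{m}{c}\right) \;=\; S_{A^T A}(m)\cdot\frac{1+m}{c+m}\, S_{B^T B}\!\left(\tfrac{m}{c}\right),
\end{equation*}
the prefactor cancels, and the substitution $m \mapsto cm$ delivers the claim $S_{(AB)^T AB}(m) = S_{A^T A}(cm)\, S_{B^T B}(m)$. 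The case $c > 1$ is handled symmetrically by inverting the roles (then the $\delta_0$ correction sits on the $(AB)^T AB$ side, with parameter $1/c$), and an identical algebraic simplification leads to the same conclusion.

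The main obstacle is the compression identity and the careful bookkeeping of dimensions: one must be sure that the ``extra zeros'' arising from the mismatch $r_N \neq q_N$ are correctly assigned to the right measure, and that asymptotic freeness of the square objects $A^T A$ and $BB^T$ (which is the stated hypothesis) is genuinely all that Voiculescu's theorem demands. Once those two points are verified, the rest is algebraic manipulation of the defining functional equations of $M$ and $S$.
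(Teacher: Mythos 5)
Your proposal is correct and follows essentially the same route as the paper's proof: both reduce to Voiculescu's theorem for the square, asymptotically free pair $A^T A$ and $B B^T$ (of size $q_N$), combined with the dimension-ratio dilation of the $S$-transform that comes from padding spectra with zero eigenvalues. Your explicit compression identity $S_{c\mu+(1-c)\delta_0}(m)=\frac{1+m}{c+m}\,S_\mu\!\left(m/c\right)$ is precisely the relation the paper derives in-line from the trace identity $\mathrm{Tr}\bigl[((AB)^T AB)^k\bigr]=\mathrm{Tr}\bigl[(BB^T A^T A)^k\bigr]$, and your direct use of it to relate $S_{B^T B}$ to $S_{BB^T}$ plays the role of the paper's specialization $A=I$; your case split $c\le 1$ versus $c>1$ is correct but not needed in the paper's formulation, since working at the level of moment generating functions lets the ratio $q_N/r_N$ enter as a prefactor regardless of whether it is larger or smaller than one.
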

\begin{proof}
See the appendix, Subsection~\ref{section:proof_rectangularproduct}
\end{proof}

Implicitly this defines a rectangular multiplicative free convolution, which could be denoted $\boxtimes_c$ in the spirit of the rectangular free additive convolution \cite{benaych2009}. But, in the current setting, this is not a good idea. Indeed, if one defines $\mu_1 \boxtimes_c \mu_2$ as the measure whose $S$-transform is $S_{\mu_1}(c \ \cdot) S_{\mu_2}$, then a quick computation shows that $\boxtimes_c$ is not associative, i.e., for a triplet $(\mu_1, \mu_2, \mu_3)$ of probability measures and a pair $(c_1, c_2) \in \R_+^* \times \R_+^*$, we generically have:
$$
   \mu_1 \boxtimes_{c_1} \left( \mu_2 \boxtimes_{c_2} \mu_3 \right)
   \neq
   \left( \mu_1 \boxtimes_{c_1} \mu_2 \right) \boxtimes_{c_2} \mu_3 \ .
$$

A better idea is to treat the dimension ratio $c$ as part of the data via a central extension:
\begin{definition}
On the set of pairs $(\mu, c)$ such that $\mu$ is a probability measure on $\R_+$ and $c \in \R_+^*$, define the operation $\boxtimes$ as:
$$ (\mu_1, c_1) \boxtimes (\mu_2, c_2)
   := \left( \nu, c_1 c_2 \right) \ ,
$$ 
where $\nu$ is the unique probability measure such that $S_\nu = S_{\mu_1}(c_2 \ \cdot) S_{\mu_2}$. This extends the classical definition as the usual free convolution is recovered with
$ (\mu_1, 1) \boxtimes (\mu_2, 1)
   := \left( \mu_1 \boxtimes \mu_2, 1 \right)
$.
\end{definition}
Such an operation is associative and will allow a neat formulation of the measure of interest in the upcoming Theorem \ref{thm:S_J}, entirely analogous to Eq. \eqref{eq:mu_J}.

\section{Theoretical resolution of the model}
\label{section:theory}

\textbf{Width profile:} Pennington et al. \cite{pennington2018} consider $N_\ell = N$ for $\ell=1, 2, \dots, N$. Here, we consider that the width of layers is not constant across layers, which is mostly the case in practice. Indeed, modern architectures typically have very sophisticated topologies with layers varying in widths. 

Let us assume that we are in the infinite width regime in the sense that $N_\ell \rightarrow \infty$, for all $\ell=0,1,2\dots,L$ with:
$ \frac{N_{\ell-1}}{N_{\ell}}\underset{\Nb\rightarrow\infty}{\longrightarrow} \lambda_\ell > 0$. And let us denote
$ \Lambda_\ell
   := \underset{\Nb\rightarrow\infty}{\lim}\frac{N_{0}}{N_{\ell}} 
    = \prod_{k=1}^{\ell} \lambda_k    
   \ , $
   with the convention $\Lambda_0=1$.

\textbf{FPT limits:} 
Let $\Nc$ be a standard Gaussian random variable on the real line
$\P\left( \Nc \in dx \right) = \frac{e^{-\frac{x^2}{2}}}{\sqrt{2\pi}} dx \ .$

Here $D_\ell^{(\Nb)}$ is diagonal with entries $\phi_\ell'([h_\ell]_i)$ (see Eq. \eqref{eq:def_D}), and the pre-activations 
$$ h_\ell = W_\ell^{(\Nb)} x_{\ell-1} + b_\ell^{(\Nb)} = W_\ell^{(\Nb)} \phi_\ell(h_{\ell-1}) + b_\ell^{(\Nb)}$$
clearly depend on the previous layers. Because of this lack of independence, the standard results of FPT cannot be applied directly i.e. asymptotic freeness does not obviously hold. 
This is an important subtlety that is addressed in the upcoming Theorem~\ref{thm:S_J}.
Based on an information propagation argument, the papers \cite{poole2016exponential, schoenholz2016deep} argue that the entries of $h_\ell$ behave as the i.i.d. samples of a Gaussian distribution with zero mean and variance $q^\ell$. A basic law of large numbers applied to Eq. \eqref{eq:def_D} gives a limit for the empirical measure
$ \mu_{D_\ell}
 = \lim_{N_\ell \rightarrow \infty} \mu_{D_\ell^{(\Nb)}}
 = \phi_\ell'\left( \sqrt{q^\ell} \Nc \right) \ .$
Also the recurrence for the variance is:
\begin{align}
\label{eq:q}
   q^\ell
& = f_\ell\left( q^{\ell-1} \right)
  = \sigma_{W_\ell}^2 \E\left[ \phi_\ell\left( \sqrt{q^{\ell-1}} \Nc \right)^2 \right] + \sigma_{b_\ell}^2,
\end{align}
with initial condition $q^1=\frac{\sigma_{W_1}^2}{N_1}\sum_{i=1}^{N_1}(x_0^i)^2+\sigma_{b_1}^2$.

Recently Pastur et al. completed this heuristic thanks to a swapping trick -- see \cite[Lemma 3.3]{pastur2020gaussian} and \cite[Remark 3.4]{pastur2020random}. They proved that, regarding the asymptotical spectral properties of $J^{(\Nb)}$, one can replace each $D_\ell^{(\Nb)}$ by a diagonal matrix with independent Gaussian entries $\sqrt{q^\ell} \Nc$ independent from the rest. In that setting, one can apply the results on products of asymptotically free matrices which were given in Section~\ref{section:free_probability}.


   \begin{thm}
   \label{thm:S_J}
    In terms of the rectangular multiplicative free convolution, the measure of (squared) singular values of $J^{(\Nb)}$ converges to
    \begin{align}
	\label{eq:mu_J_rectangular}
	\nu_J = 
	\left( \nu_{D_L}, 1 \right)  \boxtimes \left( \nu_{W_L}, \lambda_{L} \right) \boxtimes 
	\dots \boxtimes
	\left( \nu_{D_1}, 1 \right) \boxtimes \left( \nu_{W_1}, \lambda_1 \right) \ .
	\end{align}   
	Moreover, the S-transform of $J^T J$ in the infinite width regime verifies 
    \begin{align}
      \label{eq:S_JlargeN}
      S_{J^T J}(m) = & \ \prod_{\ell=1}^L\left[ S_{D_\ell^2}\left(\Lambda_\ell m    \right)
                                                S_{W_\ell^T W_\ell}\left(\Lambda_{\ell-1} m\right) \right] \ .
      \end{align}
      In particular, under the assumption that the entries of $W_\ell$ are i.i.d. :
      \begin{align*}
      S_{J^T J}(m) = \ \prod_{\ell=1}^L
                       \left( S_{D_\ell^2}\left(\Lambda_\ell m\right) 
                              \frac{1}{\sigma_{W_\ell}^2}\frac{1}{1+ \Lambda_\ell m} \right) 
      \ , \quad 
      M^{\langle -1\rangle}_{J^T J}(m)
      = \ \frac{m+1}{m}
          \prod_{\ell=1}^L \frac{\sigma_{W_\ell}^2(1+ \Lambda_\ell m)}
                              {S_{D_\ell^2}\left(\Lambda_\ell m\right)} \ .
      \end{align*}
   \end{thm}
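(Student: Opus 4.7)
The plan is to proceed in three stages: a reduction via a swapping argument that restores independence, an iterated application of Theorem~\ref{thm:rectangularproduct}, and an algebraic unpacking of the resulting $S$-transform.

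First, I would invoke the swapping trick of Pastur et al.\ (\cite[Lemma 3.3]{pastur2020gaussian}, \cite[Remark 3.4]{pastur2020random}) already cited just above the statement. The subtle point is that each $D_\ell^{(\Nb)}$ is a function of the pre-activations $h_\ell$ and hence of $W_1^{(\Nb)}, \ldots, W_\ell^{(\Nb)}$, so the factors appearing in $J^{(\Nb)}$ are not independent a priori, and the asymptotic freeness required by Theorem~\ref{thm:rectangularproduct} is not immediate. The swapping lemma allows us, for the sole purpose of computing $\lim_{\Nb \to \infty} \nu_{J^{(\Nb)}}$, to replace each $D_\ell^{(\Nb)}$ by an independent diagonal matrix with i.i.d.\ entries distributed as $\phi_\ell'(\sqrt{q^\ell}\,\Nc)$, with $q^\ell$ produced by the variance recursion~\eqref{eq:q}. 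After substitution, the family $(D_\ell^{(\Nb)}, W_\ell^{(\Nb)})_{\ell=1,\ldots,L}$ is independent and asymptotic freeness holds factor by factor. This is the main nontrivial ingredient.

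Second, I iterate Theorem~\ref{thm:rectangularproduct} on $J^{(\Nb)} = D_L^{(\Nb)} W_L^{(\Nb)} \cdots D_1^{(\Nb)} W_1^{(\Nb)}$. Each square diagonal factor $D_\ell^{(\Nb)}$ contributes a pair $(\nu_{D_\ell}, 1)$, and each rectangular weight $W_\ell^{(\Nb)}$, of dimensions $N_\ell \times N_{\ell-1}$ with $N_{\ell-1}/N_\ell \to \lambda_\ell$, contributes $(\nu_{W_\ell}, \lambda_\ell)$. The associativity of the central-extended $\boxtimes$ defined just before the theorem yields exactly the product~\eqref{eq:mu_J_rectangular}.

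Third, I unfold the defining identity $(\mu_1,c_1) \boxtimes (\mu_2,c_2) = (\nu, c_1 c_2)$ with $S_\nu(m) = S_{\mu_1}(c_2 m)\, S_{\mu_2}(m)$. An elementary induction on the number of factors shows that for a sequence of pairs $(\mu_k, c_k)_{k=1}^n$ written left to right, the $S$-transform of the composite is
\begin{equation*}
  S_\nu(m) \;=\; \prod_{k=1}^n S_{\mu_k}\!\left( m \prod_{j=k+1}^n c_j \right) \ ,
\end{equation*}
the empty product being~$1$. Applied to the alternating list $(\nu_{D_L},1),(\nu_{W_L},\lambda_L),\ldots,(\nu_{D_1},1),(\nu_{W_1},\lambda_1)$, the trailing product of charges equals $\lambda_\ell \lambda_{\ell-1} \cdots \lambda_1 = \Lambda_\ell$ at the slot of $D_\ell$ and $\lambda_{\ell-1} \cdots \lambda_1 = \Lambda_{\ell-1}$ at the slot of $W_\ell$, producing~\eqref{eq:S_JlargeN}. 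The i.i.d.\ specialization is immediate: the Marchenko--Pastur transform gives $S_{W_\ell^T W_\ell}(m) = 1/(\sigma_{W_\ell}^2(1+\lambda_\ell m))$, and the substitution $m \leftarrow \Lambda_{\ell-1} m$ collapses $\lambda_\ell \Lambda_{\ell-1}$ to $\Lambda_\ell$. Finally, the formula for $M^{\langle -1\rangle}_{J^T J}$ follows by inverting the defining relation $S_\mu(m) = (1+m)/(m M_\mu^{\langle -1\rangle}(m))$.

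The principal obstacle is the first step: taming the dependence between the $D_\ell^{(\Nb)}$ and the $W_k^{(\Nb)}$ in order to invoke the rectangular free multiplicative convolution. Once the Pastur et al.\ swap is granted, the rest of the proof is a clean algebraic computation driven entirely by the associativity of the pair convolution~$\boxtimes$ and by the classical Marchenko--Pastur $S$-transform.
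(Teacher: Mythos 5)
Your proposal is correct and follows essentially the same route as the paper: the Pastur et al.\ swapping trick to restore independence and asymptotic freeness, an iterated application of Theorem~\ref{thm:rectangularproduct} to accumulate the ratios $\Lambda_\ell$ and $\Lambda_{\ell-1}$ (your associativity formula for the pair convolution is just a repackaging of the paper's explicit induction peeling factors off $J^{(\Nb)}$), then the Marchenko--Pastur $S$-transform and the relation $S_\mu(m)=(1+m)/(m M_\mu^{\langle-1\rangle}(m))$ for the i.i.d.\ specialization.
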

   \begin{proof}
   See the appendix, Subsection~\ref{section:proof_S_J}.
   \end{proof}

\textbf{Master equation:} In the end, we only need to fix width ratios and non-linearities to form $M^{\langle -1\rangle}_{J^T J}(m)$, and get the master equation which we solve numerically thanks to an adaptive Newton-Raphson scheme. 
The non-linearities ReLU, Hard Tanh and Hard Sine yield explicit formulas, which can be found in Table \ref{table:phi} of the appendix. If $W_\ell$ has i.i.d. entries, one finds the explicit master equation:
 \begin{align}\label{eq:M-1_Jiid}
  M^{\langle -1\rangle}_{J^T J}(m)
  & = \frac{m+1}{m}
      \prod_{\ell=1}^L \sigma_{W_\ell}^2 \left(c_\ell+\Lambda_\ell m\right) \ ,
 \end{align}
 where $c_\ell=\frac{1}{2}$ when $\phi_\ell$ is ReLU, $c_\ell = C_\ell=\P\left( 0 \leq \Nc \leq \frac{1}{\sqrt{q^\ell}} \right)$ if $\phi_\ell$ is Hard Tanh and $c_\ell=1$ if $\phi_\ell$ is Hard Sine.

\section{Numerical resolution}
\label{section:numerics}

Here we describe the numerical scheme aimed at computing the spectral density of $J^T J$ in Eq. \eqref{eq:j}. 
We use the following steps to compute the spectral density at a fixed $x \in \R_+$:

\begin{itemize}[leftmargin=*]
\item Because of the Cauchy-Stieltjes inversion formula given in Lemma \ref{lemma:stieltjes_inversion}, pick a small $y>0$ in order to compute:
      $ -\frac{1}{\pi} \Im G_{J^T J}(z=x+iy) \ .$
      The smaller the better, and in practice our method works for up to $y = 10^{-9}$. Figure~\ref{fig:multiscale} shows the same target distribution but convolved with various Cauchy distributions $y \Cc$ where $y \in \{1, 10^{-1}, 10^{-4} \}$. This corresponds to computing the density $\frac{-1}{\pi} \Im G_\mu\left( \cdot + iy\right)$ for different $y$'s.
      
\item Because of Eq. \eqref{def:M}, we equivalently need to compute $M_{J^T J}(z)$.

\item $M_{J^T J}^{\langle -1 \rangle}(m)$ is available thanks to the master equation in Theorem \ref{thm:S_J}. Therefore, we need to invert $m \mapsto M_{J^T J}^{\langle -1 \rangle}(m)$. This step is the crucial part: $M_{J^T J}^{\langle -1 \rangle}$ is multi-valued and one needs to choose the correct branch.
\end{itemize}


 \begin{algorithm}[H]
   \caption{Newton lilypads, chaining basins of attraction}
\begin{algorithmic}
   \STATE {\bfseries Name:} \textsc{newton\_lilypads}
   
   \STATE{ {\bfseries Input:}
    \
    Image value:
    $z_{objective} \in \C_+$,
    \quad 
    (Optional) Proxy:
    $(z_0, m_0) \in \C_+ \times \C$.
	}
	\STATE{ {\bfseries Output:}
     $M(z_{objective})$
	}

   \# Find a proxy $(z_0,m_0=0)$ using a doubling strategy, if None given
   \IF{ $(z_0, m_0)$ is None}
	   \STATE $m \leftarrow 0$
	   \STATE $z \leftarrow z_{objective}$
	   \WHILE{not \textsc{is\_in\_basin}(z, m)}
	       \STATE $z \leftarrow z + i \Im(z) = \Re(z) + i 2 \Im(z)$ \# Double imaginary part
	   \ENDWHILE
	   \STATE $m \leftarrow $ \textsc{newton\_raphson}$(z, \textrm{Guess}=m)$
   \ELSE
       \STATE $(z,m) \leftarrow (z_0, m_0)$
   \ENDIF

   \STATE \# Starts heading towards $z_{objective}$ using dichotomy

   \WHILE{ $\left| z_{objective} - z \right| > 0$}
       \STATE $\Delta z \leftarrow z_{objective} - z$
	   \WHILE{not \textsc{is\_in\_basin}($z+\Delta z$, m)}
	       \STATE $\Delta z \leftarrow 0.5 * \Delta z$
	   \ENDWHILE       
	   \STATE $z \leftarrow z + \Delta z$
	   \STATE $m \leftarrow $ \textsc{newton\_raphson}$(z, \textrm{Guess}=m)$
   \ENDWHILE
   
   \RETURN $m$
\end{algorithmic}
\label{algo:global_strategy}
\end{algorithm}

\subsection{Initial setup}
We first use the classical Newton-Raphson scheme to invert the equation $ z = f(m)$
where $z \in \C_+$ is fixed and $f$ is rational. A neat trick which leverages the fact that $f$ is rational and that $z \in \C_+$ is to define:
\begin{align}
\label{eq:def_phi_z}
\varphi_z(m) :=  P(m)/z - Q(z) \ .
\end{align}
As such, we have
$ z = f(m) = \frac{P(m)}{Q(m)}
   \
   \Longleftrightarrow
   \
   \varphi_z(m) = 0 \ .
$
There are several advantages of doing that: (1) Inversion is recast into finding the zero of a polynomial function. (2) Since we have $\lim_{z \rightarrow \infty} M(z) = 0$, if $z$ is large in modulus, $m=0$ is a natural starting point for the algorithm when $z$ is large.

It is well-known that the Newton-Raphson scheme fails unless the initial guess $m_0 \in \C$ belongs to a basin of attraction for the method. And, provided such a guarantee, the Newton-Raphson scheme is exceptionally fast with a quadratic convergence speed. Kantorovich's seminal work in 1948 provides such a guarantee locally. For the reader's convenience, we give in Appendix~\ref{appendix:newton} the pseudo-code for the Newton-Raphson algorithm (Algorithm~\ref{algo:newton_raphson}), as well as a reference for the optimal form of the Kantorovich criterion (Theorem \ref{thm:kantorovich}).

Therefore, we assume that we have at our disposal a function $ (z,m) \mapsto \textsc{is\_in\_basin}(z, m) $ which indicates if the Kantorovich criterion is satisfied for $\varphi_z$ at any $m \in \C$. It is particularly easy to program with $\varphi_z$ polynomial.

\subsection{Newton lilypads: Doubling strategies and chaining}

Now we have all the (local) ingredients in order to describe a global strategy which solves in $m \in \C$ the equation $ \varphi_z(m) = 0 \ .$

First, one has to notice that this problem is part of a family parametrized by $z \in \C_+$. And the solution is $m \approx 0$ for $z$ large. Therefore, one can find a proxy solution for $z \in \C_+$ high enough. This is done thanks to a doubling strategy until a basin of attraction is reached.

Second, if a proxy $(z, m)$ is available, we can use the Newton-Raphson algorithm to find a solution $(z + \Delta z, m + \Delta m)$ starting from $m$. To do so, we need $\Delta z$ small enough. This on the other hand is done by dichotomy.

Tying the pieces together allows to chain the different basins of attraction and leads to Algorithm~\ref{algo:global_strategy}. Notice that in the description of the algorithm, we chose to make implicit the dependence in the function $f$, since it is only passed along as a parameter. Technically, $f$ is a parameter for all three functions 
\textsc{newton\_raphson},
\textsc{is\_in\_basin},
\textsc{newton\_lilypads}.

The discussion leading to this algorithm, combined with the Kantorovich criterion yields:
\begin{thm}
\label{thm:theoretical_guarantee}
Given $f: m \mapsto M^{\langle-1\rangle}(m)$ and $z \in \C^+$, Algorithm~\ref{algo:global_strategy} has guaranteed convergence. Moreover it returns $m = M(z)$ i.e. the (unique) holomorphic extension of the inverse of $f$ in the neighborhood of $0$.
\end{thm}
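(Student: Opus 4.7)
The plan is to split the analysis according to the two phases of Algorithm~\ref{algo:global_strategy}: the doubling loop that produces an initial proxy $(z_0, m_0)$, and the dichotomy loop that marches from the proxy to $z_{\text{objective}}$. In both phases the key is to combine the Kantorovich basin criterion (Theorem~\ref{thm:kantorovich}) with the continuity of the ``target map'' $M = M_{J^T J}$ on $\C_+$. Recall that $M$ is determined by the Cauchy-Stieltjes transform of $\nu_J$ via Eq.~\eqref{def:M}, so it is holomorphic on the whole simply-connected domain $\C_+$, with $M(z) \to 0$ as $\Im z \to \infty$; this is the ``correct branch'' whose existence we want to track.

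\textbf{Termination of the doubling phase.} Starting from $m_0 = 0$, I would show that $\varphi_z(0) \to 0$ as $\Im z \to \infty$. With $f = M^{\langle -1\rangle}$ having a simple pole at $m=0$, the polynomial $Q$ in Eq.~\eqref{eq:def_phi_z} satisfies $Q(0)=0$ while $P(0) \neq 0$, so $\varphi_z(0)$ decays like $1/z$. The remaining Kantorovich factors $\|\varphi_z'(0)^{-1}\|$ and $\sup \|\varphi_z''\|$ stay bounded: indeed $\varphi_z'(0)$ and $\varphi_z''$ depend affinely on $1/z$, and $\varphi_z'(0)$ remains bounded away from $0$ because $P'(0)/z - Q'(0)$ converges to the nonzero value $-Q'(0)$ dictated by $f$ having a genuine simple pole. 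Hence the Kantorovich product becomes arbitrarily small and the loop exits after finitely many doublings. A single Newton-Raphson call then yields $m_0 = M(z_0)$, by uniqueness of the zero in the basin combined with continuity of $M$.

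\textbf{Termination of the dichotomy phase.} I would argue by induction: at the top of each outer iteration, the current pair $(z,m)$ satisfies $m = M(z)$. As $\Delta z \to 0$ one has $\varphi_{z+\Delta z}(m) \to \varphi_z(m) = 0$ and the derivative factors vary continuously, so the Kantorovich quantity at $m$ for $\varphi_{z+\Delta z}$ is eventually below the critical threshold. The inner halving loop therefore halts after a bounded number of steps, the Newton-Raphson call converges, and $|z_{\text{objective}} - z|$ strictly decreases. After finitely many outer iterations the step $\Delta z$ reaches full length and the loop exits.

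\textbf{Correctness.} Each Newton-Raphson call, initialized inside a Kantorovich basin, converges to the \emph{unique} zero of $\varphi_{z'}$ in that basin. By induction along the polygonal path $z_0 \to z_1 \to \dots \to z_{\text{objective}}$ produced by the algorithm, the iterates $m_k$ realize the analytic continuation of the germ $(M,z_0)$ along this path. Since $\C_+$ is simply connected and $M$ is single-valued holomorphic there, analytic continuation inside $\C_+$ is path-independent, so the returned value equals $M(z_{\text{objective}})$, which by Lemma~\ref{lemma:stieltjes_inversion} is the desired quantity.

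The main obstacle will be the quantitative control of the Kantorovich constants during the doubling phase: one must uniformly bound $\|\varphi_z''\|$ and uniformly lower-bound $|\varphi_z'(0)|$ as $|z|$ grows. Here this follows at once from the explicit rational form of $M^{\langle -1\rangle}_{J^TJ}$ in Theorem~\ref{thm:S_J}; for a general $f$ it would require an additional non-degeneracy assumption on the pole of $f$ at $0$.
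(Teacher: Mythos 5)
Your proposal is correct and follows essentially the same route as the paper, whose ``proof'' of Theorem~\ref{thm:theoretical_guarantee} is nothing more than the discussion preceding Algorithm~\ref{algo:global_strategy} (doubling the imaginary part until a Kantorovich basin around $m\approx 0$ is reached, then chaining basins by dichotomy) combined with Theorem~\ref{thm:kantorovich}. In fact your write-up supplies more detail than the paper does (explicit decay of the Kantorovich quantities as $\Im z \to \infty$ using the simple pole of $M^{\langle-1\rangle}$ at $0$, and branch-correctness via single-valued analytic continuation on the simply connected $\C_+$); the one step you assert without spelling it out --- that finitely many outer iterations suffice, which needs a uniform lower bound on the accepted step obtained by compactness of the segment joining the proxy to $z_{objective}$ --- is likewise left implicit in the paper.
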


\section{On the experiment}
\label{section:benchmarks}

To leverage the numerical scheme, we designed the following experiment whose results are reported on Table \ref{table:correlations}. Consider a classical Multi-Layer Perceptron (MLP) with $L=4$ layers, feed-forward and fully-connected with ReLU non-linearities. The MLP's architecture is determined by the vector $\lambda = \left( \lambda_0, \lambda_1, \dots, \lambda_L \right)$ while the gains at initialization are determined by the vector $\sigma = \left( \sigma_1, \sigma_2, \dots, \sigma_L \right)$.

By randomly sampling the vector $\lambda$, we explore the space of possible architectures. In other to have balanced architectures, we chose independent $\lambda_i$'s with $\E\left( \lambda_i \right) = 1$. Likewise, we also sample different possible gains. 
Hence we find ourselves with several MLPs architectures each with it's unique initialization. The spectral distributions are computed thanks to our Algorithm \ref{algo:global_strategy}.

As shown by the parameter count in Fig. \ref{figure:scatter_plots}, some architectures are quite large for task at hand. To guarantee convergence despite the changing architectures, we intentionally chose a low learning rate and did 50 epochs of training. For 200 MLPs, the experiment takes 10 hours of compute on a consumer GPU (RTX 2080Ti). This is to be contrasted with less than one minute of CPU compute for the spectral measure $\nu_J$.

To control for any stochastic variability, we also trained multiple instances for each MLP, and the shown results are averages between a few runs. Finally we calculate the correlations between the accuracy on the test set and percentiles of the spectral distribution $\nu_J$. Notice that, a posteriori, the FPT regime is justified because the results are coherent despite various width scales.

\section{Conclusion} 
In summary, this paper developed FPT in the rectangular setup which is the main tool for a metamodel for the stability of neural network. Then we gave a method for the numerical resolution both fast and with theoretical guarantees. Finally, confirming the initial claim that stability is a good proxy for performance, we empirically demonstrated that the accuracy of a feed-forward neural network after training is highly correlated to higher quantiles of the \textit{theoretically computed} spectral distribution using this method.

Regarding the importance of hyperbolicity, we surmise that a few large singular values allow the SGD to escape local minima without compromising overall stability. 

A challenging problem would be to accommodate for skip-connections. In this case the chain rule used for back-propagation changes in a fundamental way. The Jacobian cannot be approximated by a simple product of free matrices as the same free variable will appear at multiple locations.

\section{Acknowledgements} 
R.C. recognizes the support of French ANR STARS held by Guillaume Cébron as well as support from the ANR-3IA Artificial and Natural Intelligence Toulouse Institute.



\bibliography{FreeNN}
\bibliographystyle{plain}

\newpage

\appendix

\section{Appendix: More figures and tables}


\begin{figure}[ht!]
\includegraphics[trim=0 0 0 0, clip, width=1.0\textwidth]{./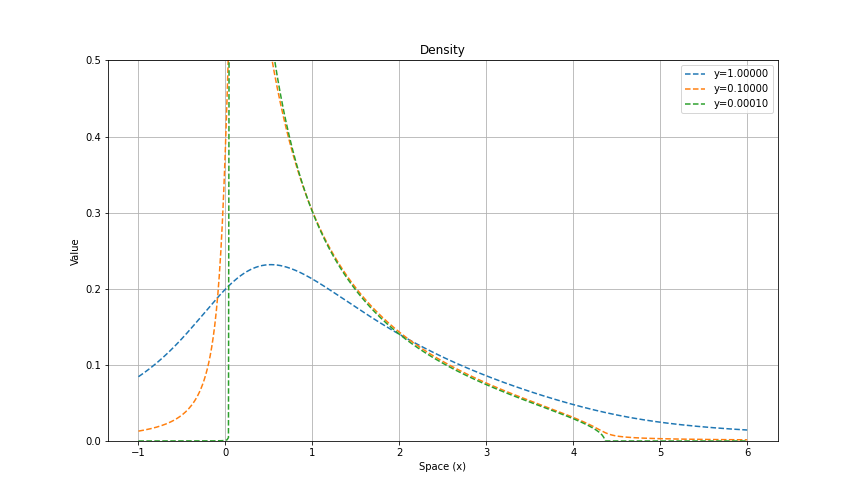}
\caption{Plot of the probability density $\frac{-1}{\pi} \Im G_\mu\left( \cdot + iy\right)$ for $y \in \{1, 10^{-1}, 10^{-4} \}$. Here $\mu$ is the multiplicative free convolution of three Marchenko-Pastur distributions, with different parameters. }
\label{fig:multiscale}
\end{figure}

\begin{figure}[ht!]
\centering
\includegraphics[trim=0 0 0 0, clip, width=1.0\textwidth]{./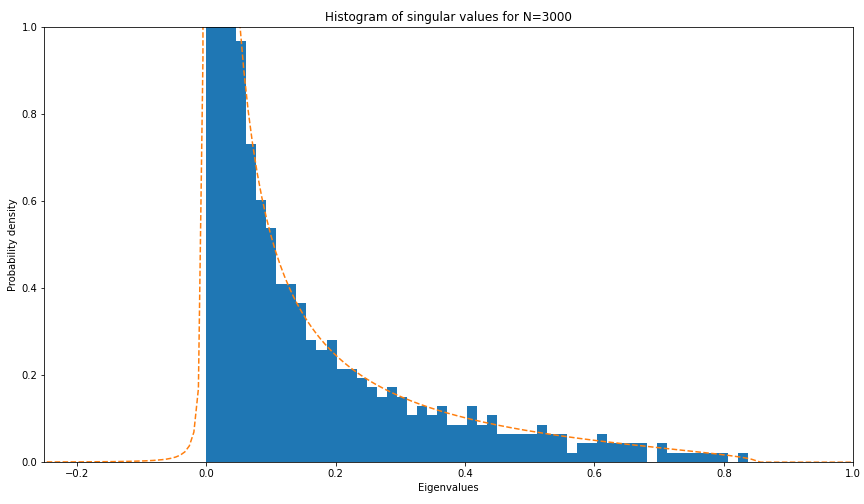}
\caption{Plot of density of singular values for the Jacobian matrix $J$. The network has constant width and ReLU non-linearities. Monte-Carlo sampling uses matrices of size $N_\ell = N = 3000$.}
\label{fig:mc_vs_lilypads}
\end{figure}

\section{Appendix: Table of non-linearities}

Let us now collect the various required formulas, and specialize them to a selection of non-linearities: ReLU, Hard Tanh and Hard Sine. These non-linearities are very tractable hence the choice. As discussed in the introduction, the empirical distribution of $D_\ell^{(\Nb)}$ converges to the law of $\phi_l'\left( \sqrt{q^\ell} \Nc \right)$. From this observation was deduced in \cite{pennington2018} the following formula:
\begin{align}
\label{eq:formula_M}
   M_{D_\ell^2}(z) & 
 = \sum_{k\ge1}\frac{m_k(D_\ell^2)}{z^k}
 = \E\left[ \frac{\phi'_\ell(\sqrt{q^{\ell}}\Nc)^2}{z-\phi'_\ell(\sqrt{q^{\ell}}\Nc)^2} \right] \ ,
\end{align}

\begin{align}
\label{eq:formula_S}
   S_{D_\ell^2}(m) & = \frac{1+m}{m M^{\langle-1\rangle}_{D_\ell^2}(m)} \ ,
\end{align}

\begin{table}[htp!]
\begin{center}
\begin{tabular}{|c||c|c|c|c|c|}
  \hline
  & Linear & ReLU & Hard Tanh & Hard sine / Triangle\\
  \hline       &
  &
  &
  &\\
  $\phi_\ell(h)$  & $h$ & $[h]_+$   & $[h+1]_+-[h-1]_+-1$ &  $\frac{2}{\pi}\arcsin \circ \sin(\frac{\pi}{2} h)$\\
  $M_{D_\ell^2}(z)$ & $\frac{1}{z-1}$ 
               & $\frac{1}{2}\frac{1}{z-1}$ 
               & $C_\ell \frac{1}{z-1}$ 
               & $\frac{1}{z-1}$ \\
  $m_k(D_\ell^2)$   & $1$             
               & $\frac{1}{2}$              
               & $C_\ell$ 
               & $1$\\
  $S_{D_\ell^2}(m)$ & $1$
               & $\frac{1+m}{\frac{1}{2}+m}$
               & $\frac{1+m}{C_\ell+m}$
               & $1$\\
  $g_\ell(q)$     & $q$
               & $\frac{q}{2}$
               & $2 q C_\ell - \sqrt{\frac{2q}{\pi}} e^{-\frac{1}{2q}}$
               & $\displaystyle \third + \frac{4}{\pi^2} \sum_{n \geq 1} \frac{(-1)^n}{n^2} e^{-q\frac{\pi^2 n^2}{2}}$\\
               &
               &
               &
               &\\
  \hline
\end{tabular}
\end{center}
\caption{Table of formulas for moment generating functions and $S$-transforms.}
\label{table:phi}
\end{table}

In Table 1.1, the reader shall find the formulae of $M_{D_\ell^2}$, $S_{D_\ell^2}$ and the recurrence relation given by $f_\ell$ where 

\begin{align*}
    f_\ell\left( q^{\ell-1} \right)
  & = \sigma_{W_\ell}^2 g_\ell( q^{\ell-1} )+ \sigma_{b_\ell}^2 \ , 
  \end{align*}
  $$ \textrm{ with } g_\ell(q) := \E\left[ \phi_\ell\left( \sqrt{q} \Nc \right)^2 \right] \ .$$
  and
  $$ C_\ell = \P\left( 0 \leq \Nc \leq \frac{1}{\sqrt{q^\ell}} \right) \ .$$

\textit{Proof of formulas in Table \ref{table:phi}:}
ReLU and Hard Tanh have already been computed in \cite{pennington2018}.

\textbf{Hard Sine:} This is the most tricky formula to establish. If $\phi(x) = \frac{2}{\pi} \arcsin \circ \sin\left( \frac{\pi}{2} x \right)$ and $\widehat{f}(\xi) = \int dx \ f(x) e^{i \xi x}$ is the Fourier transform on $f$, then the application of the Plancherel formula yields:
\begin{align*}
  & \E\left[ \phi\left( \sqrt{q} \Nc \right)^2 \right]\\
= & \int_{-\infty}^\infty dx \frac{e^{-\frac{x^2}{2q}}}{\sqrt{2\pi q}} \phi^2(x) \\
= & \frac{1}{2\pi} \int_{-\infty}^{\infty} d\xi \ \widehat{\phi^2}(\xi) e^{-q\frac{\xi^2}{2}} \ .
\end{align*}
But in term of Fourier series:
$$ \phi^2(x) = \sum_{n \in \Z} \left( \widehat{\phi^2} \right)_n e^{i \pi n x}$$
as $\phi^2(x) = x^2$ on $[-1, 1]$ and extended in order to become $2$-periodic. In terms of Schwartz distributions:
$$ \widehat{\phi^2}(\xi)
   =
   \sum_{n \in \Z}
   \left( \widehat{\phi^2} \right)_n 
   \widehat{\left( x \mapsto e^{i \pi n x} \right)}
   =
   \sum_{n \in \Z}
   \left( \widehat{\phi^2} \right)_n 
   2 \pi \delta_{-\pi n}(d\xi)
   \ .
$$
Hence:
\begin{align*}
    \E\left[ \phi\left( \sqrt{q} \Nc \right)^2 \right]
= & \sum_{n \in \Z} \left( \widehat{\phi^2} \right)_n e^{-q\frac{\pi^2 n^2}{2}} \ .
\end{align*}
We conclude by computing the Fourier coefficients of $\phi^2$.
$$ \left( \widehat{\phi^2} \right)_0 = \frac{1}{2} \int_{-1}^1 dx \ x^2 = \third \ .$$
\begin{align*}
    \left( \widehat{\phi^2} \right)_n 
= & \ \frac{1}{2} \int_{-1}^1 dx \ x^2 \ e^{-i\pi n x} \\
= & \ \int_{0}^1 dx \ x^2 \ \cos(\pi n x) \\
= & \ -\frac{1}{\pi n} \int_{0}^1 dx \ 2x \ \sin(\pi n x) \\
= & \ \frac{2}{\left( \pi n \right)^2} \left[ x \cos(n \pi x) \right]_0^1
    -\frac{2}{(\pi n)^2} \int_{0}^1 dx \ \cos(\pi n x) \\
= & \ \frac{2}{\left( \pi n \right)^2} (-1)^n \ .
\end{align*}
In the end:
\begin{align*}
    \E\left[ \phi\left( \sqrt{q} \Nc \right)^2 \right]
= & \ \third + \frac{4}{\pi^2} \sum_{n \geq 1} \frac{(-1)^n}{n^2} e^{-q\frac{\pi^2 n^2}{2}} \ .
\end{align*}

\section{Appendix: Proofs}

\subsection{Proof of Theorem ~\ref{thm:rectangularproduct}} 
\label{section:proof_rectangularproduct}
For $k \in \N$, we have :
  \begin{align*}
    Tr \left[ \left(\left(A^{(N)}B^{(N)}\right)^TA^{(N)}B^{(N)}\right)^k \right] = & \ Tr \left[ \left(\left(B^{(N)}\right)^T\left(A^{(N)}\right)^TAB\right)^k \right]\\
    = & \ Tr\left[ (B^{(N)}\left(B^{(N)}\right)^T\left(A^{(N)}\right)^TA)^k \right] \ .
  \end{align*}
As $\left(A^{(N)}B^{(N)}\right)^TA^{(N)}B^{(N)}\in M_{r_N}(\C)$ and $B^{(N)}\left(B^{(N)}\right)^T\left(A^{(N)}\right)^TA^{(N)}\in M_{q_N}(\C)$, this shows that 
\[M_{\left(A^{(N)}B^{(N)}\right)^T A^{(N)}B^{(N)}}(z) = \frac{q_N}{r_N}M_{B^{(N)}\left(B^{(N)}\right)^T\left(A^{(N)}\right)^TA^{(N)}}(z) \ ,\]
and then 
\[M^{\langle-1\rangle}_{\left(A^{(N)}B^{(N)}\right)^T A^{(N)}B^{(N)}}(m) = M^{\langle-1\rangle}_{B^{(N)}\left(B^{(N)}\right)^T\left(A^{(N)}\right)^TA^{(N)}}\left(\frac{r_N}{q_N} m\right).\]
Consequently,
\begin{align}
    S_{\left(A^{(N)}B^{(N)}\right)^T A^{(N)}B^{(N)}}(m)
= & \frac{1+m}{mM^{\langle-1\rangle}_{\left(A^{(N)}B^{(N)}\right)^T A^{(N)}B^{(N)}}(m)}\nonumber\\
= & \frac{1+m}{mM^{\langle-1\rangle}_{B^{(N)}\left(B^{(N)}\right)^T\left(A^{(N)}\right)^TA^{(N)}}\left(\frac{r_N}{q_N} m\right)}\times \frac{1+\frac{r_N}{q_N} m}{\frac{r_N}{q_N} m}\times\frac{\frac{r_N}{q_N} m}{1+\frac{r_N}{q_N} m}\nonumber\\
= & \frac{r_N}{q_N}\frac{1+m}{1+\frac{r_N}{q_N} m}S_{B^{(N)}\left(B^{(N)}\right)^T\left(A^{(N)}\right)^TA^{(N)}}\left(\frac{r_N}{q_N} m\right) \ .\label{eq:matrixproduct}
\end{align}
As $\left(A^{(N)}\right)^TA^{(N)}$ and $B^{(N)}\left(^{(N)}\right)^T$ are  asymptotically free, taking the limit $N\rightarrow+\infty$ and applying Voiculescu's Theorem~\ref{thm:voiculescu}
, we get 
\[S_{(AB)^* AB}(m) =  \alpha\frac{1+m}{1+\alpha m}
                     S_{BB^*}\left(\alpha m\right)
                     S_{A^*A}\left(\alpha m\right).\]

Moreover, the  above equality is true replacing $A$ with the identity $I$. $S_{I}(m)=1$ yields:
\[S_{B^*B}(m) = \alpha\frac{1+m}{1+\alpha m}S_{B B^*}\left(\alpha m\right) \ .\]
Finally we have 
\[S_{(AB)^* AB}(m) = S_{A^*A}\left(\alpha m\right) S_{B^* B}(m) \ .\]
This concludes the proof.

\subsection{Proof of Theorem \ref{thm:S_J}} 
\label{section:proof_S_J}
  We assume that $W_\ell$ have i.i.d. entries. Thanks to a swapping trick justified in \cite{pastur2020random}, we can assume that the matrices $D_\ell$ have i.i.d. entries independent from the rest of the network, distributed as $\phi_\ell'(\sqrt{q_\ell} \Nc)$. Notice that we can also replace the $D_\ell$'s by deterministic matrices that use the quantiles of the same distribution. This together with standard results from FPT such as \cite{mingo2017free} gives asymptotic freeness -- see \cite{belinschi2017} for a more general result reflecting the current state of the art. Therefore, we can apply Theorem \ref{thm:rectangularproduct}. 
  

  Starting from Eq. \eqref{eq:j}
, we get in the infinite width  regime and by induction:


\begin{align*}
  & S_{\left(J^{(\Nb)}\right)^TJ^{(\Nb)}}(m)  \\
  &= S_{\left(D_L^{(\Nb)} W_L^{(\Nb)} D_{L-1}^{(\Nb)} W_{L-1}^{(\Nb)} \dots D_1^{(\Nb)} W_1^{(\Nb)}\right)^TD_L^{(\Nb)} W_L^{(\Nb)} D_{L-1}^{(\Nb)} W_{L-1}^{(\Nb)} \dots D_1^{(\Nb)} W_1^{(\Nb)} }(m)\\
              &= S_{\left(D_L^{(\Nb)} W_L^{(\Nb)} D_{L-1}^{(\Nb)} W_{L-1}^{(\Nb)} \dots D_2^{(\Nb)} W_2^{(\Nb)}D_1^{(\Nb)}\right)^TD_L^{(\Nb)} W_L^{(\Nb)} D_{L-1}^{(\Nb)} W_{L-1}^{(\Nb)} \dots D_2^{(\Nb)} W_2^{(\Nb)}D_1^{(\Nb)}  }(\lambda_1m)\\
              &\quad \quad\times S_{\left(W_1^{(\Nb)}\right)^TW_1^{(\Nb)}}(m)\\
              &= S_{\left(D_L^{(\Nb)} W_L^{(\Nb)} D_{L-1}^{(\Nb)} W_{L-1}^{(\Nb)} \dots D_2^{(\Nb)} W_2^{(\Nb)}\right)^TD_L^{(\Nb)} W_L^{(\Nb)} D_{L-1}^{(\Nb)} W_{L-1}^{(\Nb)} \dots D_2^{(\Nb)} W_2^{(\Nb)}}(\lambda_1m)\\
              & \quad\quad \times S_{\left(D_1^{(\Nb)}\right)^2}(\lambda_1m)S_{\left(W_1^{(\Nb)}\right)^TW_1^{(\Nb)}}(m)\\
              & \ \ \vdots\\
              &=\prod_{\ell=1}^L\left[ S_{\left(D_\ell^{(\Nb)}\right)^2}\left(\prod_{k=1}^l\lambda_k m\right)S_{\left(W_\ell^{(\Nb)}\right)^TW_\ell^{(\Nb)}}\left(\prod_{k=1}^{l-1}\lambda_k  m\right) \right]  \\
              &=\prod_{\ell=1}^L\left[ S_{\left(D_\ell^{(\Nb)}\right)^2}\left(\Lambda_\ell m\right)S_{\left(W_\ell^{(\Nb)}\right)^TW_\ell^{(\Nb)}}\left(\Lambda_{\ell-1} m\right) \right] \ ,
\end{align*}
with the convention $\Lambda_{0}=\prod_{k=1}^{0}\lambda_k=1$.

Under the asumption that the entries of $W_\ell$ are i.i.d., the Marcenko-Pastur Theorem gives
\[S_{\left(W_\ell^{(\Nb)}\right)^TW_\ell^{(\Nb)}}(m)=\frac{1}{\sigma_{W_\ell}^2}\frac{1}{1+\lambda_\ell m},\]
which leads to 
$$ S_{\left(J^{(\Nb)}\right)^TJ^{(\Nb)}}(m)=\prod_{\ell=1}^L\left(S_{\left(D_\ell^{(\Nb)}\right)^2}\left(\Lambda_\ell m\right) \frac{1}{\sigma_{W_\ell}^2}\frac{1}{1+\Lambda_\ell m} \right) \ .$$
We thus have 
\begin{align*}
  M^{\langle -1\rangle}_{\left(J^{(\Nb)}\right)^TJ^{(\Nb)}}(m)&=\frac{m+1}{m S_{\left(J^{(\Nb)}\right)^TJ^{(\Nb)}}(m)}\\
  &=\frac{m+1}{m\prod_{\ell=1}^L\left(S_{\left(D_\ell^{(\Nb)}\right)^2}\left(\Lambda_\ell m\right) \frac{1}{\sigma_{W_\ell}^2}\frac{1}{1+ \Lambda_\ell m} \right)} \\
  & = \frac{(m+1)\prod_{\ell=1}^L\sigma_{W_\ell}^2(1+\Lambda_\ell m)}{m \prod_{\ell=1}^LS_{\left(D_\ell^{(\Nb)}\right)^2}\left(\Lambda_\ell m\right)}  \\
  & = \frac{(m+1)\prod_{\ell=1}^L\sigma_{W_\ell}^2(1+\Lambda_\ell m)}{m \prod_{\ell=1}^LS_{\left(D_\ell^{(\Nb)}\right)^2}\left(\Lambda_\ell m\right)} \ .
 \end{align*}

\section{Appendix: On the classical Newton-Raphson scheme}
\label{appendix:newton}

\begin{algorithm}[H]
   \caption{Newton-Raphson scheme for a rational function $f$}
\begin{algorithmic}
   \STATE {\bfseries Name:} \textsc{newton\_raphson}
   
   \STATE{ {\bfseries Input:}
   
    Numerical precision:
    $\varepsilon > 0$ (Default: $10^{-12}$),
    
    Image value:
    $z \in \C_+$,
    
    Polynomials:
    $P$, $Q$ such that $f = \frac{P}{Q}$,
    
    (Optional) Guess:
    $m_0 \in \C$, (Default: $m_0 = 0$).
	}
   \STATE {\bfseries Code:} 
   
   \STATE $m \leftarrow m_0$

   \WHILE{True}
       \STATE value $\leftarrow \varphi_z(m)$  \quad \quad \# See Eq. \eqref{eq:def_phi_z}
       \IF{$ \left|value \right| < \varepsilon$}
        	\RETURN $m$
       \ENDIF
       \STATE grad $\leftarrow \varphi_z'(m)$        
       \STATE $m \leftarrow m - value/grad$
   \ENDWHILE
\end{algorithmic}
\label{algo:newton_raphson}
\end{algorithm}

Here we give the optimal Kantorovich criterion from \cite{gragg1974} adapted to this paper. Fix $z \in \C_+$ and recall that $\varphi_z$ in Eq. \eqref{eq:def_phi_z} is the map whose zero we want to find.

\begin{thm}[Kantorovich's criterion, \cite{kantorovich1948}]
\label{thm:kantorovich}
Consider a starting point $m_0 \in \C$, and define:
$$ \delta := \left| \frac{\varphi_z(m_0)}{\varphi_z'(m_0) } \right| \ , 
   \quad
   \kappa := \left| \frac{1}{\varphi_z'(m_0) } \right| \ .
$$
If the starting point satisfies
$ h := \delta \kappa \lambda
   < \half$, where
$$ \lambda := \sup_{|m-m_0| \leq t^*} \left| \varphi_z''(m) \right| \ ,
   \quad 
   t^* := \frac{2 \delta}
              {1+\sqrt{1-h}}
        < 2 \delta
   \ .
$$
Then, the Newton-Raphson scheme, starting from $m_0$ converges to $m^*$ such that $\varphi_z(m^*)=0$. Furthermore, the convergence at each step is at least quadratic.
\end{thm}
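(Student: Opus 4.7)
The plan is to follow the classical majorant principle of Kantorovich: dominate the complex Newton sequence $(m_n)$ generated by $\varphi_z$ by a real, monotone increasing sequence $(t_n)$ coming from Newton's method applied to a carefully chosen scalar quadratic polynomial. Define the majorant
\[
  p(t) := \frac{\lambda}{2} t^2 - \frac{t}{\kappa} + \frac{\delta}{\kappa},
\]
so that $p(0) = \delta/\kappa = |\varphi_z(m_0)|$, $p'(0) = -1/\kappa = -|\varphi_z'(m_0)|$, and $p''(t) \equiv \lambda$ matches the uniform bound on $|\varphi_z''|$ on the relevant disk. Running Newton's method on $p$ with $t_0 = 0$, the condition $h = \delta \kappa \lambda < \half$ ensures the discriminant is positive, so $p$ has two positive real roots and the iterates $(t_n)$ are monotone increasing and converge to the smaller one, which is $t^*$ up to the usual closed-form rearrangement.

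Next I would carry out the comparison by simultaneous induction on $n \geq 0$ of three claims: (i) $|m_{n+1} - m_n| \leq t_{n+1} - t_n$, (ii) $|m_n - m_0| \leq t_n$, and (iii) $\varphi_z'(m_n)$ is invertible with $|1/\varphi_z'(m_n)| \leq \kappa/(1 - \kappa \lambda t_n)$. The initialization at $n=0$ is immediate from the definitions of $\delta$ and $\kappa$. For the induction step, two estimates are key. First, since $\varphi_z$ is polynomial in $m$ and since the Newton equation makes the zeroth- and first-order terms in the Taylor expansion of $\varphi_z$ around $m_n$ cancel, integration along the complex segment $[m_n, m_{n+1}]$ yields
\[
  |\varphi_z(m_{n+1})| \leq \frac{\lambda}{2} |m_{n+1} - m_n|^2,
\]
valid provided the segment lies in the disk $\{|m - m_0| \leq t^*\}$, which is exactly what (ii) provides. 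Second, from $\varphi_z'(m_n) = \varphi_z'(m_0) + \int_{m_0}^{m_n} \varphi_z''(s)\, ds$ one obtains $|\varphi_z'(m_n) - \varphi_z'(m_0)| \leq \lambda t_n$, and since $\kappa \lambda t^* < 1$ is precisely the content of $h < \half$, the Banach perturbation lemma delivers (iii).

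Combining these estimates with the explicit scalar identity $t_{n+1} - t_n = (\lambda/2)(t_n - t_{n-1})^2/(1/\kappa - \lambda t_n)$, derived directly from $p(t_n) = (\lambda/2)(t_n - t_{n-1})^2$ (the first-order terms cancelling just as in the complex case), the induction step reads
\[
  |m_{n+1} - m_n| \leq \frac{|\varphi_z(m_n)|}{|\varphi_z'(m_n)|} \leq \frac{(\lambda/2)|m_n - m_{n-1}|^2}{1/\kappa - \lambda t_n} \leq \frac{(\lambda/2)(t_n - t_{n-1})^2}{1/\kappa - \lambda t_n} = t_{n+1} - t_n.
\]
This closes the induction and shows that $(m_n)$ is Cauchy, dominated term-by-term by the convergent telescoping series $\sum (t_{n+1} - t_n) = t^*$. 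Hence $m_n \to m^*$ in the closed disk of radius $t^*$ around $m_0$, and passing to the limit in the Newton relation $\varphi_z(m_n) = -\varphi_z'(m_n)(m_{n+1} - m_n)$ together with the uniform bound on $|\varphi_z'(m_n)|$ yields $\varphi_z(m^*) = 0$.

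The main obstacle, in my view, is not the individual Taylor or Banach estimates but organizing the induction so that the geometric constraint (ii), which guarantees each segment $[m_n, m_{n+1}]$ sits inside the disk where $\lambda$ is a valid bound on $|\varphi_z''|$, is propagated cleanly together with the analytic estimates; the three claims are genuinely coupled. Once the induction is in place, quadratic convergence comes for free: the chain above already exhibits the factor $|m_n - m_{n-1}|^2$, and the prefactor $(\lambda/2)/(1/\kappa - \lambda t^*)$ is finite and uniform in $n$ by construction.
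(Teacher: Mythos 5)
Your argument is the classical Kantorovich majorization proof, and it is essentially correct; note, however, that the paper does not prove Theorem~\ref{thm:kantorovich} at all --- it is imported from \cite{kantorovich1948} and \cite{gragg1974} --- so there is no in-paper proof to compare against. Your sketch supplies exactly the standard argument those references contain: the scalar quadratic majorant $p(t)=\frac{\lambda}{2}t^2-\frac{t}{\kappa}+\frac{\delta}{\kappa}$, the coupled induction on $|m_{n+1}-m_n|\le t_{n+1}-t_n$, $|m_n-m_0|\le t_n$, and the lower bound $|\varphi_z'(m_n)|\ge 1/\kappa-\lambda t_n$, followed by the telescoping Cauchy estimate. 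The one point you should not wave away is the identification of the smaller root of $p$: its discriminant is $(1-2h)/\kappa^2$ and the smaller root is $\frac{2\delta}{1+\sqrt{1-2h}}$, which is \emph{not} the $t^*=\frac{2\delta}{1+\sqrt{1-h}}$ appearing in the statement, so ``up to the usual closed-form rearrangement'' hides a genuine discrepancy. Either the statement carries a typo ($\sqrt{1-h}$ for $\sqrt{1-2h}$, the latter being the Gragg--Tapia form) or, if the statement is taken literally, your $\lambda$ must be a bound for $|\varphi_z''|$ on the slightly larger disk of radius $\frac{2\delta}{1+\sqrt{1-2h}}$, since that is the disk your induction actually confines the iterates to. Apart from this, the details check out: the strict inequality $h<\frac{1}{2}$ gives $1/\kappa-\lambda t^*=\sqrt{1-2h}/\kappa>0$, which is what keeps the prefactor $(\lambda/2)/(1/\kappa-\lambda t_n)$ uniformly bounded and delivers the quadratic rate, and since $\varphi_z$ is a polynomial the Taylor-with-integral-remainder and perturbation steps are unproblematic.
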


\section{Appendix: Moments of J}
  We can reach an early understanding of the behavior of $J$'s singular values by computing mean and variance. For ease of notation, we write:
$$ m_k^{(s)}(A) = m_k(A^T A)$$  
for any operator $A$, which admits a measure of singular values. We have under the assumptions that the entries of $W_\ell$ are i.i.d. :
  \begin{align}
  \label{eq:moment1_J}
  m_1^{(s)}(J) = & \prod_{\ell=1}^L \left(m_1^{(s)}(D_\ell)m_1^{(s)}(W_\ell)\right) \ = \ \prod_{\ell=1}^L (c_\ell \sigma_{W_\ell}^2)\ ,
  \end{align}
  \begin{align}
  \label{eq:moment2_J}
     & m_2^{(s)}(J)-m_1^{(s)}(J)\\
   = & \ m_1^{(s)}(J)^2 
      \left(
      \sum_{\ell=1}^L\Lambda_\ell\left(\frac{m_2^{(s)}(D_\ell) - m_1^{(s)}(D_\ell)^2}{m_1^{(s)}(D_\ell)^2} + \frac{m_2^{(s)}(W_\ell) - m_1^{(s)}(W_\ell)^2}{m_1^{(s)}(W_\ell)^2} \right)
      \right)   
      \nonumber\\
  = & \left(\prod_{\ell=1}^L c_\ell^2 \sigma_{W_\ell}^4\right)
    \left(
    \sum_{\ell=1}^L\Lambda_\ell\left(\frac{1 - c_\ell}{c_\ell} + \lambda_\ell\right)
    \right) \ .  
    \nonumber
  \end{align}
  Under the asumption that $W_\ell^TW_\ell=\sigma_{W_\ell}I_{N_{\ell-1}}$, we find the same $m_1^{(s)}(J)$ and :
  \begin{align}
    \label{eq:moment2_J-orth}
    m_2^{(s)}(J)-(m_1^{(s)}(J))^2 &= \sum_{\ell=1}^L\left(\Lambda_\ell\left(\frac{m_2^{(s)}(D_\ell)}{m_1^{(s)}(D_\ell)^2}-2\right)\right)\prod_{\ell=1}^L \left(c_\ell^2\sigma_{W_\ell}^4\right)\\
    &=\sum_{\ell=1}^L\left(\Lambda_\ell\left(\frac{1}{c_\ell}-2\right)\right)\prod_{\ell=1}^L \left(c_\ell^2\sigma_{W_\ell}^4\right).\nonumber
  \end{align}
  These formulas need to be interpreted:
  \begin{itemize}
  \item Variance grows with $L$, showing increased instability with depth.
  \item Larger layers, relative to $N_0$, give larger $\Lambda_\ell$'s and thus the same effect.
  \end{itemize}
  
  \begin{proof}[Proof: Computations of moments]
\label{section:computation_moment}
The following remark is useful in the computation of moments.
  \begin{rmk}[Moments]\label{rmk:moments}
  At the neighborhood of $z \sim \infty$:
  $$ M_\mu(z) = \frac{m_1(\mu)}{z} + \frac{m_2(\mu)}{z^2} + \Oc\left( z^{-3} \right) \ .$$
  By inversion, at the neighborhood of $m \sim 0$:
  $$ M_\mu^{\langle -1 \rangle}(m) = \frac{m_1(\mu)}{m} + \Oc\left( 1 \right) \ .$$
  $$ M_\mu^{\langle -1 \rangle}(m) = \frac{m_1(\mu)}{m} + \frac{m_2(\mu)}{m_1(\mu)} + \Oc\left( m \right) \ .$$
  Hence:
  \begin{align*}
  S_\mu(m) & = \ \frac{1+m}{m_1(\mu) + m\frac{m_2(\mu)}{m_1(\mu)} + \Oc\left( m^2 \right)}\\
           & = \ \frac{1}{m_1(\mu)}
                 \left( 1+m \right)
                 \left( 1 - m\frac{m_2(\mu)}{m_1(\mu)^2} + \Oc\left( m^2 \right) \right)\\
           & = \frac{1}{m_1(\mu)}
             + \frac{m}{m_1(\mu)}
               \left( 1 - \frac{m_2(\mu)}{m_1(\mu)^2} \right)
             + \Oc\left( m^2 \right) \ .
  \end{align*}
  \end{rmk}
  
  Thanks to this, we can prove Eq. \eqref{eq:moment1_J} and \eqref{eq:moment2_J}. By Remark \ref{rmk:moments} and Theorem \ref{thm:S_J}, we have as $m \rightarrow 0$: 
    \begin{align*}
          S_{J^TJ}(m)
      = & \prod_{\ell=1}^L\left[ S_{D_\ell^2}\left(\Lambda_\ell m\right)S_{W_\ell^TW_\ell}\left(\Lambda_\ell m\right) \right]\\
      = & \prod_{\ell=1}^L\Big[ 
          \left( \frac{1}{m_1^{(s)}(D_\ell)} + m\frac{\Lambda_\ell}{m_1^{(s)}(D_\ell)}
          \left( 1 - \frac{m_2^{(s)}(D_\ell)}{m_1^{(s)}(D_\ell)^2} \right)+ \Oc\left( m^2 \right)\right) \\
        & \quad \quad \quad
          \left( \frac{1}{m_1^{(s)}(W_\ell)}
          + m\frac{\Lambda_\ell}{m_1^{(s)}(W_\ell)}
          \left( 1 - \frac{m_2^{(s)}(W_\ell)}{m_1^{(s)}(W_\ell)^2} \right)+ \Oc\left( m^2 \right)\right)
          \Big] \\
      = & \left[ \prod_{\ell=1}^L \frac{1}{m_1^{(s)}(D_\ell)m_1^{(s)}(W_\ell)} \right]
          \prod_{\ell=1}^L\Big[ 
            1
          + m \Lambda_\ell
          \left( 2 - \frac{m_2^{(s)}(D_\ell)}{m_1^{(s)}(D_\ell)^2} - \frac{m_2^{(s)}(W_\ell)}{m_1^{(s)}(W_\ell)^2} \right)+ \Oc\left( m^2 \right)
          \Big] \ .
      \end{align*}
      Identifying the first order term, one finds indeed Eq. \eqref{eq:moment1_J}. Continuing the previous computation:
      \begin{align*}
          S_{J^TJ}(m)
      = & \frac{1}{m_1^{(s)}(J)} 
        + \frac{m}{m_1^{(s)}(J)} \left( \sum_{\ell=1}^L \Lambda_\ell \left(2-\frac{m_2^{(s)}(D_\ell)}{m_1^{(s)}(D_\ell)^2}-\frac{m_2^{(s)}(W_\ell)}{m_1^{(s)}(W_\ell)^2}\right) \right)+ \Oc\left( m^2 \right).
      \end{align*}
    Applying Remark \ref{rmk:moments} again for $S_{J^TJ}$, we get :
    $$ \sum_{\ell=1}^L \Lambda_\ell \left(2-\frac{m_2^{(s)}(D_\ell)}{m_1^{(s)}(D_\ell)^2}-\frac{m_2^{(s)}(W_\ell)}{m_1^{(s)}(W_\ell)^2}\right)
       =
       1 - \frac{m_2^{(s)}(J)}{m_1^{(s)}(J)^2} 
    $$
    which is equivalent to Eq. \eqref{eq:moment2_J}.


    We conclude by specializing to classical weight distributions. Under the assumption that the entries of $W_\ell$ are i.i.d. we have $m_1^{(s)}(W_\ell)=\sigma_{W_\ell}^2$ and $m_2^{(s)}(W_\ell)=\sigma_{W_\ell}^4(1+\lambda_\ell)$ which gives  
    \begin{align*}
      m_2^{(s)}(J)&=\left(1-\sum_{\ell=1}^L\left(\Lambda_\ell\left(1-\frac{m_2^{(s)}(D_\ell)}{m_1^{(s)}(D_\ell)^2}-\lambda_\ell\right)\right)\right)\prod_{\ell=1}^L \left(m_1^{(s)}(D_\ell)^2\sigma_{W_\ell}^4\right)\\
      m_2^{(s)}(J)-(m_1^{(s)}(J))^2 &= \sum_{\ell=1}^L\left(\Lambda_\ell\left(\frac{m_2^{(s)}(D_\ell)}{m_1^{(s)}(D_\ell)^2}+\lambda_\ell-1\right)\right)\prod_{\ell=1}^L \left(m_1^{(s)}(D_\ell)^2\sigma_{W_\ell}^4\right).
    \end{align*}
    
\end{proof}

\section{Appendix: More details on the benchmarks and the experiment}
\label{appendix:experiment}

Recall that we have provided an anonymized Github repository at the address:

\url{https://github.com/redachhaibi/FreeNN/} 

\subsection{On the benchmarks of Fig. \ref{fig:benchmark}} 

The figure presents the computational time required for computing the density of $\nu_J$. Let us make the following comments:

\begin{itemize}[leftmargin=*]
\item Pennington and al. 's algorithm has been implemented using a native root finding procedure. As such, it is much more optimized than our code.
\item A closer examination of the timings shows that Newton lilypads scales sublinearly with the number of required points. This is easily understood by the fact that smaller $N$ requires the computation of more basins of attraction per point. 
\item Also, the Monte-Carlo method used matrices of size $n=3000$. Not only Monte-Carlo is imprecise because of the noise, but its performance scales very poorly with $n$ since one needs to diagonalize ever larger matrices. Of course, Monte-Carlo remains the easiest method to implement. But the deterministic methods compute the density up to machine precision.
\end{itemize}

\subsection{Description of the experiment}
Recall that we considered a classical Multi-Layer Perceptron (MLP) with $L=4$ layers, feed-forward and fully-connected with ReLU non-linearities. The MLP's architecture is determined by the vector $\lambda = \left( \lambda_0, \lambda_1, \dots, \lambda_L \right)$. The initialization follows the Xavier normal initialization \cite{glorot2010} as implemented in Pytorch \cite{PYTORCH}. Thus the full vector of variances of this initialization $\sigma = \left( \sigma_1, \sigma_2, \dots, \sigma_L \right)$ is determined up to a multiplicative factor called the gain..

First, we sample random architectures. We chose
\begin{itemize}[leftmargin=*]
    \item the $\lambda_i$ to be i.i.d. and uniform on $\{  \frac14, \frac13, \frac12, \frac23, 1.0, \frac32, 2, 3, 4 \}$. As such the mean is $\E\lambda_i = 1$ in order to obtain relatively balanced architectures.
    \item the single gain is taken as a uniform random variable on $\{\frac14, \frac12, 1, 2, 4\}$.
\end{itemize}

Secondly, we compute the spectral distributions predicted by FPT for each random architecture. Some architectures are very unbalanced and are discarded.

Thirdly, we train multiple instances of MLPs and record the learning curves.

In the end, by considering FPT quantiles and the final test accuracy for each MLP, we obtain data which amenable to a statistical analysis.

Finally, MNIST and FashionMNIST both have the same format: grayscale images of 28x28 pixels. To be able to train exactly the same MLPs on all three datasets, we applied a grayscale transformation and resizing to 28x28 to CIFAR10.

\subsection{Final remarks}
Our sampling procedure does not consider large fluctuations in the $\lambda_i$ 's and focuses on balanced architectures. Likewise, the gains at initialization do not deviate much from the classical \cite{glorot2010}. It is important to recognize that without the insight of FPT, the scalings applied to such initializations are already normalized so that spectral measures do converge. 

As such, we never encounter truly problematic gradient vanishing or gradient explosion, which completely sabotage the convergence of the neural network. Our refined FPT metrics are arguably "second order corrections". Nevertheless, it is surprising the 90th percentile in Table \ref{table:correlations} highly correlates to the final test accuracy after training. In the end, tuning FPT metrics does not amount to second order corrections.



\end{document}